\documentclass[11pt]{article}
\def\anonymous{0}
\def\comments{1}
\ifnum\anonymous=1
    \def\comments{0}
\fi
\usepackage[margin=1in]{geometry}
\usepackage{amsmath,amssymb,amsfonts,amsthm,epsfig}
\usepackage[usenames,dvipsnames]{xcolor}
\usepackage{bm,xspace}
\usepackage{tcolorbox}
\usepackage{cancel}
\usepackage{fullpage}
\usepackage{liyang}
\usepackage{framed}
\usepackage{verbatim}
\usepackage{enumitem}
\usepackage{array}
\usepackage{multirow}
\usepackage{afterpage}
\usepackage{mathrsfs}
\usepackage{pifont} 
\usepackage{chngpage}
\usepackage[normalem]{ulem}
\usepackage{boxedminipage}
\usepackage{caption}
\usepackage{forest}
\usepackage{microtype} 

\usepackage{algorithm}
\usepackage{algorithmicx}
\usepackage{algpseudocode}

\usepackage{pgfplots}
\pgfplotsset{width=8cm,compat=newest}

\ifnum\comments=1
    \newcommand{\lnote}[1]{\footnote{{\bf \color{blue}Li-Yang}: {#1}}}
    \newcommand{\gnote}[1]{\footnote{{\bf \color{violet}Guy}: {#1}}}
    \newcommand{\knote}[1]{\footnote{{\bf \color{orange}Konstantina}: {#1}}}
    \newcommand{\lynote}[1]{\footnote{{\bf \color{magenta}Lydia}: {#1}}}
    \newcommand{\jnote}[1]{\footnote{{\bf \color{red}Jon}: {#1}}}
\else 
    \newcommand{\lnote}[1]{}
    \newcommand{\gnote}[1]{}
    \newcommand{\knote}[1]{}
    \newcommand{\lynote}[1]{}
    \newcommand{\jnote}[1]{}
\fi
%
%

\def\colorful{1}

\ifnum\colorful=1

\fi
\ifnum\colorful=0

\fi

%
%

\newcommand{\error}{\mathrm{error}}
\newcommand{\avgerror}{\mathrm{avg}\text{-}\mathrm{error}}

\newcommand{\proj}{\mathrm{proj}}

\newcommand{\VC}{\mathrm{VC}}
\newcommand{\gen}{\mathrm{gen}}



\DeclareMathOperator*{\argmax}{arg\,max}

\newlist{enumprop}{enumerate}{1} 
\setlist[enumprop]{label=\arabic*.,ref=\theproposition.\arabic*}
\crefalias{enumpropi}{proposition}

\makeatletter
\newtheorem*{rep@theorem}{\rep@title}
\newcommand{\newreptheorem}[2]{
\newenvironment{rep#1}[1]{
 \def\rep@title{#2 \ref{##1}}
 \begin{rep@theorem}\itshape}
 {\end{rep@theorem}}}
\makeatother

\newreptheorem{theorem}{Theorem}

\begin{document}

\title{Multitask Learning via Shared Features: \\ Algorithms and Hardness}
\ifnum\anonymous=1
\author{Anonymous Author(s)}
\date{}
\else
\author{
    Konstantina Bairaktari\thanks{Khoury College of Computer Sciences, Northeastern University. Supported by NSF grants CCF-1750640, CNS-1816028, CNS-2120603, CCF-1909314, and CCF-1750716. \texttt{bairaktari.k@northeastern.edu}}\hspace{10mm}
    \and Guy Blanc\thanks{Department of Computer Science, Stanford University. Supported by NSF grant CCF-1942123.}\hspace{10mm}
    \and Li-Yang Tan\thanks{Department of Computer Science, Stanford University. Supported by NSF grant CCF-1942123.}
    \and Jonathan Ullman\thanks{Khoury College of Computer Sciences, Northeastern University.  Supported by NSF grants CCF-1750640, CNS-1816028, and CNS-2120603. \texttt{jullman@ccs.neu.edu}}\hspace{10mm}
    \and Lydia Zakynthinou\thanks{Khoury College of Computer Sciences, Northeastern University. Supported by NSF grants CCF-1750640, CNS-1816028, CNS-2120603, and a Facebook PhD Fellowship. \texttt{zakynthinou.l@northeastern.edu}}
}
\date{\small{\today}}
\fi

\maketitle

\begin{abstract}
We investigate the computational efficiency of multitask learning of Boolean functions over the $d$-dimensional hypercube, that are related by means of a feature representation of size $k\ll d$ shared across all tasks. We present a polynomial time multitask learning algorithm for the concept class of halfspaces with margin $\gamma$, which is based on a simultaneous boosting technique and requires only $\mathrm{poly}(k/\gamma)$ samples-per-task and $\mathrm{poly}(k\log(d)/\gamma)$ samples in total. 

In addition, we prove a computational separation, showing that assuming there exists a concept class that cannot be learned in the attribute-efficient model, we can construct another concept class such that can be learned in the attribute-efficient model, but cannot be multitask learned efficiently---multitask learning this concept class either requires super-polynomial time complexity or a much larger total number of samples.

\end{abstract}

\section{Introduction}

A remarkable pattern in modern machine learning is that complex models often transfer surprisingly well to solve new tasks with very little additional data for that task---far less than one would need to solve that task from scratch.  This sort of \emph{multitask learning} (sometimes called \emph{meta learning}, \emph{transfer learning}, or \emph{few-shot learning})~\cite{Thr98,TM95,AEP08,PM13,Bax97,TP12,SSSG17} is possible because the tasks share some common structure that makes a model for one task relevant for solving the others.  One useful structure is a \emph{shared representation} common to all tasks.  We assume that there is a low-dimensional representation of the data that is sufficient for solving every learning task---because the relevant features are shared across all tasks, we can pool the data for all tasks to find the representation, and because the representation is low-dimensional, each task can be solved with relatively few samples.  

The existence of low-dimensional representations often provably reduces the number of samples needed to solve each task~\cite{MaurerPR16,TripuraneniJJ20}, and popular heuristics like MAML~\cite{FinnAL17} or other gradient-based methods (e.g.~\cite{NicholS18,RaghuRBV19,AntoniouES19}) are often successful at multitask learning.  However, much less is known about \emph{computationally efficient algorithms} for exploiting these shared representations, or about complexity-theoretic barriers that are specific to multitask learning. While there is an elegant emerging body of research on provably efficient methods for multitask learning~\cite{BalcanBV15, DuHKLL20, TripuraneniJJ21, ThekumparampilJNO21, CollinsMOS22}, so far this work is currently limited to simple regression problems, or makes strong distribution assumptions, or both, so little is known about algorithms for classification tasks or for more general distributions.

In this work we focus on the particular setting of binary classification where the shared representation is simply a subset of the input features.  In this setting we give the first computationally efficient algorithm for multitask learning of halfspaces, under a distribution-free margin assumption on the halfspaces but with no distributional assumptions.  This result follows from a more general extension of \emph{AdaBoost}~\cite{FreundS97} to the multitask setting.  We also prove a computational separation, showing that under a natural complexity assumption, there is a concept class such that: (1) The class can be learned in the \emph{attribute-efficient model}~\cite{Littlestone87}---any single task can be learned in polynomial time with sample complexity proportional to the number of relevant features.  (2) The class cannot be multitask learned efficiently---the corresponding multitask problem where we want to learn multiple concepts over the same set of features cannot be solved efficiently unless the total data across all tasks is much larger.

\subsection{Our Results}

We now give a more detailed, but still high-level and informal statement of our two main results. To describe our results we need to introduce some notation, although we defer formal preliminaries to~\Cref{sec:prelims}.  We assume that there are $n$ learning tasks.  For each task there is a corresponding distribution $D^{(i)}$ over labeled examples $(x,y) \in \{\pm 1\}^d \times \{\pm 1\}$. We assume realizable tasks so that there is a known \emph{concept class} $\mcC$ and for each task there is some $f^{(i)} \in \mcC$ such that
$$
\Prx_{(x,y) \sim D^{(i)}}[f^{(i)}(x) = y] = 1
$$
Our learning algorithm is given $m$ samples from each distribution $D^{(i)}$, for a total of $mn$ samples, and must return $\hat{f}^{(1)},\dots,\hat{f}^{(n)}$ that label the data well on average over the tasks
$$
\frac{1}{n} \sum_{i=1}^{n} \Prx_{(x,y) \sim D^{(i)}}[\hat{f}^{(i)}(x) = y] \approx 1
$$
A na\"ive baseline solution is to solve each task separately, and our goal is to improve over this baseline.  Since we can't hope to do so without some relationship between the functions $f^{(i)}$, in this work we consider cases where:
\begin{enumerate}
    \item the functions in $\mcC$ have a small number of \emph{relevant coordinates}, and 
    \item the total number of relevant coordinates among all of the functions $f^{(1)},\dots,f^{(n)}$ is small.
\end{enumerate}
The set of up to $k$ coordinates in total that are relevant for at least one of the tasks is what we call the \emph{shared representation}---they are shared because they are the same across tasks, and they are a representation because we can think of these coordinates as a way of reducing examples to a lower-dimensional form that suffices for learning.

\medskip\textbf{Efficient Multitask Learning of Halfspaces.}
First we consider the case where $\mcC$ consists of \emph{$k$-sparse, $\gamma$-margin halfspaces}, meaning functions of the form
$
f(x) = \mathrm{sign}(\theta \cdot x)
$
for some $\theta \in \R^d$ with at most $k$ non-zero coordinates, which also satisfy the condition
$$
\forall i \in [n] \ \forall x \in \{\pm 1\}^d \quad |\theta \cdot x| \geq \gamma \cdot \|\theta\|_2
$$
Note that our margin condition is quite different from, and incomparable to, the standard margin conditions used in machine learning, both in how it normalizes the data and because it's a condition that only depends on the function and not the distribution-function pair.\footnote{The typical margin condition used in the literature requires that $\Pr_{x \sim D}[|\theta \cdot x| \geq \gamma \| \theta \|_2 \| x \|_2]=1$.  For example this condition is used to analyze the classical Perceptron algorithm for learning halfspaces.}

Since each halfspace is $k$-sparse, the na\"ive baseline of solving each task separately would require $m = O(k \log(d))$ samples for each task.  In many applications, the number of total features $d$ may be extremely large relative to the number of relevant features $k$, so we think of drawing $\log(d)$ samples-per-task as prohibitive.  Our main theorem says that, assuming a shared representation of $k$ features, we can indeed do better provided the number of tasks is at least $\log(d)$.
\begin{theorem}[Informal] \label{thm:intro-halfspace}
    Suppose we have $n$ distributions $D^{(1)},\dots,D^{(n)}$ each labeled by a $k$-sparse, $\gamma$-margin halfspace. There is a $\mathrm{poly}(d,k,n,m,1/\gamma)$-time algorithm that draws $m$ samples from each task and solves the multitask learning problem provided each task has $m \geq \mathrm{poly}(k/\gamma)$ samples and there are $nm \geq \mathrm{poly}(k \log(d) / \gamma)$ samples in total.
\end{theorem}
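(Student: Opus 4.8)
The plan is to recast the $n$ tasks as a single learning problem over the combined domain $[n]\times\{\pm1\}^d$ and run a multitask version of AdaBoost on it, exploiting the shared relevant coordinates to keep the number of weak hypotheses---and hence the ``effective dimension'' of the output tuple---polynomial in $k/\gamma$ rather than growing with $n$ or $d$. Let $R\subseteq[d]$ be the set of coordinates relevant to at least one $f^{(i)}$; by the shared-representation assumption $|R|\le k$. Rescaling so that $\|\theta^{(i)}\|_2=1$, realizability together with the margin condition give $y(\theta^{(i)}\cdot x)=|\theta^{(i)}\cdot x|\ge\gamma$ for every example $(x,y)$ of task $i$, and since $\theta^{(i)}$ is $k$-sparse, $\|\theta^{(i)}\|_1\le\sqrt k$. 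Hence for any reweighting $D'$ of task $i$'s examples, $\sum_{j\in R}\theta^{(i)}_j\,\mathbb{E}_{D'}[yx_j]\ge\gamma$, so some coordinate $j\in R$ has $|\mathbb{E}_{D'}[yx_j]|\ge\gamma/\sqrt k$: each individual task is weakly learnable by signed dictators supported on $R$.

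The crux is to promote this to a single weak learner for the combined problem whose hypothesis touches only \emph{one} coordinate. Consider the class $\mathcal{W}=\{(i,x)\mapsto\phi(i)x_j : j\in[d],\ \phi\colon[n]\to\{\pm1\}\}$, with $\log|\mathcal{W}|=n+\log d$. Given any distribution $D'$ over realizable triples $(i,x,y)$ with task-marginal $(p_i)_{i\in[n]}$, pick for each task a coordinate $j_i\in R$ with per-task advantage $\ge\gamma/\sqrt k$; partition the tasks by the value of $j_i$ and average---some coordinate $j^\star\in R$ collects task-mass at least $1/|R|\ge1/k$, and choosing $\phi(i)=\mathrm{sign}(\mathbb{E}_{D'}[yx_{j^\star}\mid i])$ yields $\mathbb{E}_{D'}[\phi(i)x_{j^\star}y]\ge\tfrac1k\cdot\tfrac{\gamma}{\sqrt k}=\tfrac{\gamma}{k^{3/2}}$. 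A maximizing $(j,\phi)\in\mathcal{W}$ is computed in $\mathrm{poly}(n,d,nm)$ time by forming all per-task weighted correlations, picking the best coordinate, and reading off its per-task signs.

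Now run AdaBoost on the uniform distribution over the $nm$ training examples with this weak learner. Since the edge is $\ge\gamma/k^{3/2}$ against every reweighting of the (realizable) sample, after $T=O\!\big(k^3\log(nm)/\gamma^2\big)$ rounds the combined empirical error is $0$, and the hypothesis is $H(i,x)=\mathrm{sign}\!\big(\sum_{t\le T}\alpha_t\phi_t(i)x_{j_t}\big)$; restricting to task $i$ gives $\hat f^{(i)}(x)=\mathrm{sign}\!\big(\sum_t(\alpha_t\phi_t(i))x_{j_t}\big)$, a halfspace over the \emph{same} $\le T$ coordinates $j_1,\dots,j_T$ for all tasks. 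To finish, bound the generalization error of $H$ by analyzing the combined class $\mathcal{H}=\{(i,x)\mapsto\mathrm{sign}(\sum_{t\le T}\alpha_t\phi_t(i)x_{j_t})\}$ over the combined distribution---crucially, \emph{not} task by task, since analyzing $\hat f^{(i)}$ alone would force $m\gtrsim T\log d$ and reintroduce the $\log d$ factor. By standard bounds for sign-combinations, $\mathcal{H}$ has VC dimension $\tilde{O}(T\log|\mathcal{W}|)=\tilde{O}(T(n+\log d))$; since each of the $n$ strata contributes exactly $m$ samples, the empirical error over the $nm$ points is an unbiased estimate of the true combined error from $nm$ independent draws, so the realizable uniform-convergence bound gives true multitask error $\le\epsilon$ once $nm\gtrsim\tilde{O}(T(n+\log d)/\epsilon)$. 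The $Tn$ term reads as ``$m\gtrsim\tilde{O}(T)=\mathrm{poly}(k/\gamma)$ per task'' and the $T\log d$ term as ``$nm\gtrsim\mathrm{poly}(k\log(d)/\gamma)$ in total,'' and every step runs in $\mathrm{poly}(d,k,n,m,1/\gamma)$ time.

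I expect the main obstacle to be precisely this combined weak-learning and generalization accounting: the gain over solving each task separately comes entirely from all $n$ tasks reusing the \emph{same} $T$ coordinates, so one must (a) force the combined weak hypothesis onto a single coordinate---incurring the $1/k$ loss from averaging over $R$---and (b) charge the $\log d$ cost of selecting those coordinates once against the pooled $nm$ samples rather than once per task, which is exactly what analyzing $\mathcal{H}$ over the combined distribution accomplishes. The halfspace-to-dictator weak learnability, AdaBoost's convergence, and the running-time and stratified-sampling bookkeeping are routine.
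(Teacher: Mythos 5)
Your proposal is correct and follows the same skeleton as the paper's proof---boost over all tasks simultaneously with single-coordinate weak learners, use a discriminator-lemma argument to show a shared coordinate has nontrivial advantage averaged over tasks, and charge the $\log d$ cost of coordinate selection to the pooled sample in the generalization step---but the two non-routine components are executed differently. For the weak learner, you force a literal single hypothesis $(i,x)\mapsto \phi(i)x_{j}$ on the combined domain and pay a factor $1/k$ by pigeonholing tasks onto a common coordinate, getting edge $\gamma/k^{3/2}$ and hence $O(k^3/\gamma^2)$ rounds; the paper instead selects the coordinate maximizing a weighted sum of \emph{squared} per-task correlations and then gives each task its own step size $\alpha^{(i)}$ (subsuming your $\phi(i)$ and allowing per-task magnitudes), which yields a cleaner potential argument on the summed exponential loss and $O(k^2\log(1/\varepsilon)/\gamma^2)$ rounds---a factor-$k$ savings, immaterial for the informal statement. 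For generalization, your reduction to a single VC bound for the composed class over the combined domain, with VC dimension $\tilde{O}(T(n+\log d))$ and a stratified sample of exactly $m$ points per task, is essentially equivalent to the paper's multitask bound ($m\gtrsim \mathrm{VC}(\mathcal{C}\mid\mathcal{V})/\varepsilon$ per task plus $nm\gtrsim \log|\mathcal{V}|/\varepsilon$ in total); the stratified-sampling uniform convergence you assert is exactly the point the paper must prove, which it does via a double-sample argument combined with a Chernoff bound for sampling without replacement within each task's group. One small blemish: by insisting on zero training error you take $T=O(k^3\log(nm)/\gamma^2)$, which leaks a $\log(nm)$ (hence potentially a $\log d$) into the per-task sample requirement $m\gtrsim T/\varepsilon$; the paper avoids this by stopping at training error $\varepsilon/4$ after a number of rounds independent of the sample size, and your argument is repaired the same way.
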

The key advantage of our result is that the number of samples-per-task can be entirely independent of $d$ provided we have enough tasks to solve.

We achieve this result via a generalization of the AdaBoost algorithm for the multitask setting.  In the single-task setting, a boosting algorithm takes a sequence of \emph{weak learners} that predict the label slightly better than random, and combines them to obtain a \emph{strong learner} that predicts the labels nearly perfectly.  We give a variant for the multitask setting that takes a sequence of weak learners that have some advantage over random on average over tasks and combine them to obtain a new learner that predicts labels nearly perfectly.

\medskip\textbf{Separating Multitask and Attribute-Efficient Learning.} One way to interpret~\Cref{thm:intro-halfspace} is that it shows that multitask learning of sparse, large-margin halfspaces, is not that much harder than \emph{attribute-efficient learning} of the same class.  In the attribute-efficient learning model~\cite{Littlestone87}, we have a learning problem on examples in $\{\pm 1\}^d$ where functions in $\mcC$ have at most $k \ll d$ relevant variables, and our goal is to learn in polynomial time, with sample complexity of the form $s = f(k) \cdot \mathrm{poly}(\log(d))$.  Multitask learning strictly generalizes this model by splitting the samples in $n$ tasks with $m = s/n$ samples-per-task with each task labeled by a different function.

Attribute-efficient learning is a very challenging problem and large-margin halfspaces are one of the few classes for which we do know an attribute-efficient learning algorithm~\cite{Littlestone87,Val99}.  Perhaps we can efficiently multitask learn \emph{any} concept class that is learnable in the attribute-efficient model?  We give strong evidence that this is not the case, by showing that, under a plausible assumption, there is some class $\mcC'$ consisting of functions of $k+\log\log(d)$
relevant variables, that can be learned in the attribute-efficient model but for which there is no efficient multitask learning algorithm.
\begin{theorem}[Informal]\label{thm:intro-hardness}
    Assume there is some class of functions $\mcC$ with $k$ relevant variables and a distribution $D_{x}$ over unlabeled examples such that $\mcC$ is learnable over $D_{x}$, but every efficient learner for $\mcC$ requires many more samples.  Then there is another concept class $\mcC'$ over $d' \lesssim d+\log\log(d)$ features with $k' \lesssim k + \log\log(d)$ relevant variables and a distribution $E$ over the remaining features, such that:
    \begin{enumerate}
        \item $\mcC'$ can be attribute-efficient learned over $D_{x}\times E$,
        \item $\mcC'$ can be multitask learned with few samples in exponential time, but
        \item any polynomial-time algorithm that multitask learns $\mcC'$ requires many more samples.
    \end{enumerate}
\end{theorem}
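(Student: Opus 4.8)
The plan is to construct $\mcC'$ from $\mcC$ by gluing on a small ``hint gadget'' that lets a data-rich single task cheaply unlock its own target concept, but that is provably useless to a data-poor task.

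\textbf{Construction.} I would add $\ell=\Theta(\log\log d)$ fresh coordinates, which (together with one ``mode'' bit) I read as a pointer into $T:=2^{\ell}=\mathrm{poly}(\log d)$ ``hint slots.'' Since the assumption makes $\mcC$ learnable from few samples, its concepts can be indexed by a short description of length $L=\mathrm{poly}(\log d)$ (its at most $\mathrm{poly}(\log d)$-VC identity together with its $\le k$ relevant coordinates). Fix a $T$-out-of-$T$ secret sharing that spreads these $L$ bits across the $T$ slots, one bit per slot. For each $f\in\mcC$ and each such sharing $\sigma$ of the description of $f$, let $h_{f,\sigma}(x,\text{gadget})$ output $f(x)$ in ``query mode'' and the $\sigma$-bit of the indicated slot in ``hint mode''; set $\mcC'=\{h_{f,\sigma}\}$, and let $E$ place a $1-\epsilon/10$ fraction of its mass on query mode and the rest uniformly over the $T$ slots. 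Then $d'=d+O(\log\log d)$ and $k'=k+O(\log\log d)$. I would pick all parameters so that $T$ exceeds the total multitask sample budget; this makes the hint ``all or nothing'' in every regime of interest.

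\textbf{Items 1 and 2.} For item 1, a single-task learner over $D_x\times E$ with the attribute-efficient budget $f(k')\cdot\mathrm{poly}(\log d')\ge\mathrm{poly}(\log d)$ sees, by coupon collecting, all $T$ hint bits with high probability; it XORs them to recover the description of $f$ (hence $f$), outputs the exact concept $h_{f,\sigma}$, and so errs with probability $0$ in polynomial time. For item 2, the exponential-time learner pools all $nm$ samples, enumerates the $\binom{d'}{k'}$ candidate shared representations to pin down the $\le k'$ relevant coordinates, and then runs ERM on each task over the $\mathrm{poly}(d)$-size class of concepts supported on those coordinates; uniform convergence makes this succeed with $\mathrm{poly}(k')$ samples per task and $\mathrm{poly}(k'\log d')$ in total.

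\textbf{Item 3.} I would argue by reduction. Suppose $A$ is a polynomial-time multitask learner solving $\mcC'$ with $\mathrm{poly}(k')$ samples per task and $\mathrm{poly}(k'\log d')$ total. Build an efficient learner $B$ for $\mcC$ that uses only $\mathrm{poly}(k'\log d')=\mathrm{poly}(k\log d)$ samples: on input a labeled sample for an unknown $f\in\mcC$, $B$ forms an $n$-task $\mcC'$ instance in which every task is labeled by $f$ (with an independent sharing per task), fills the query-mode slots with its own samples, and fills the hint-mode slots with \emph{uniformly random} bits. Because $T$ exceeds the total budget, $A$ never observes $T$ distinct slots, so any fewer than $T$ revealed bits are (with high probability over $B$'s randomness) consistent with a genuine random sharing; hence $B$'s instance is distributed like a legal $\mcC'$ instance with shared representation $\{\text{relevant coordinates of }f\}\cup\{\text{gadget coordinates}\}$ (of size $\le k'$). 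Running $A$ yields $\hat h^{(1)},\dots,\hat h^{(n)}$ of high average accuracy; since all of them target $f$, outputting a uniformly random $\hat h^{(r)}$ restricted to query mode gives, with high probability over $r$, a good hypothesis for $f$ in polynomial time from $\mathrm{poly}(k\log d)$ samples --- contradicting the assumption that efficient learners for $\mcC$ need many more samples.

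\textbf{Main obstacle.} The crux is parameter-threading: the per-task budget $m=\mathrm{poly}(k')$ must be simultaneously too small to unlock one task's concept (so $m<T$ and $m$ lies below the single-task sample complexity of $\mcC$) yet large enough that pooling $n$ such tasks recovers the shared representation in exponential time; and the total $nm=\mathrm{poly}(k'\log d')$ must sit above the information-theoretic multitask threshold but below the ``many more samples'' threshold of the assumption. Making these inequalities mutually satisfiable pins down $L$, $T$, $\ell$, forces the hint length and slot count to be $\mathrm{poly}(\log d)$, and requires checking that $\mcC$'s concepts are cheap to learn once the representation is known (so items 1 and 2 are genuinely efficient/exponential as claimed). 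Using secret sharing rather than plain bit-revelation is the device that makes partial hints provably worthless, which is exactly what keeps the reduction in item 3 honest.
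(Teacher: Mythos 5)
Your construction is, at the level of architecture, exactly the paper's: append gadget coordinates that index bits of a secret-shared encoding of the concept's identity, so that a sample-rich single task reconstructs the secret by coupon collection (item 1), an exponential-time multitask learner ignores the gadget and brute-forces the $\binom{d}{k}$ candidate representations with a multitask uniform-convergence bound (item 2), and a polynomial-time multitask learner is turned into an efficient single-task learner for $\mcC$ by planting fake shares and aggregating the returned hypotheses (item 3, where the paper takes a majority vote over all $(i,p,q)$ rather than a random index, but either works). Items 1 and 2 go through essentially as you describe.

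The genuine gap is in the secret-sharing parameterization, which is the load-bearing part of item 3. First, a perfect $T$-out-of-$T$ scheme with one-bit shares can only hide a one-bit secret (each share must carry at least as much entropy as the secret), so ``spread the $L$ description bits over $T$ slots, one bit per slot'' is not a realizable threshold scheme for $L>1$; the repair is to make each share a full $L$-bit string and index slots by a pair $(p,q)$ = (share, bit). But once you do that, your hiding condition ``the learner never observes all $T$ slots'' no longer suffices: a sample could reveal bit $q$ of \emph{every} share for many values of $q$ and thereby reconstruct those secret bits while still missing most slots. The condition that actually works --- and the one the paper uses --- is per task: set the reconstruction threshold to $m+1$, one more than the samples-per-task, and give each task independent sharing randomness; then each task queries at most $m$ distinct share indices $p$, so an entire share is unseen and the observed hint bits are exactly consistent (after re-randomizing) with a sharing of the true secret. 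Comparing $T$ against the \emph{total} budget $nm$ is both unnecessary (shares from different tasks use independent randomness and cannot be combined) and harmful: in the hard regime $nm$ can be nearly the single-task threshold $s$ (e.g., $d^{1-1/k}$), so enforcing $T>nm$ would force $\Theta(\log d)$ gadget coordinates and an attribute-efficient sample complexity of at least $T=\mathrm{poly}(d)$, destroying item 1 and the $k+\log\log(d)$ bound. Your ``main obstacle'' paragraph already writes the per-task inequality $m<T$; that is the condition to keep, made precise as ``threshold exceeds samples-per-task, with fresh randomness per task.''
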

An interesting feature of this result is that it gives \emph{representation-independent} hardness, meaning it makes no assumptions about the form of the multitask learner's output.

We note that there are many concept classes that could be used to instantiate our assumption, an obvious example being $k$-sparse parities.  For parities, $O(k \log(d))$ samples suffice for exponential-time algorithms, but the best known efficient algorithms require $\Omega(d^{1-1/k})$ samples~\cite{KlivansS06}.   Our result is quantitative and can be instantiated with many choices of parameters, so we give a concrete example.  If we assume that the current attribute-efficient learning algorithms for $k$-sparse parities are optimal, then we get another concept class that can be attribute-efficient learned with $s = \mathrm{poly}(k\log(d))$ samples, and can be multitask learned with $nm = O(k \log(d))$ total samples and $m = O(k\log\log(d))$ samples-per-task, but for which any polynomial-time multitask learner with $n = \Omega(\log d)$ tasks requires $nm = \Omega(d^{1-1/k})$ total samples.

We note that our separation only holds for attribute-efficient and multitask learning for the specific distribution $E$ over the $d'-d$ new features of class $\mcC'$ of our example.  In contrast, our positive result for learning halfspaces is distribution-free.  It is an intriguing open problem to separate \emph{distribution-free} attribute-efficient and multitask learning.


\section{Preliminaries} \label{sec:prelims}
\newcommand{\rel}{\mathrm{Rel}}
For an integer $n$, we write $[n]=\{1,\ldots, n\}$. Each unlabelled sample is a vector of features $x\in \bits^d$. We use $x_j$ to denote the $j$-th feature of sample $x$ and $x_{-j}$ to denote the vector $x$ with the $j$-th coordinate removed. 
We consider Boolean classifiers of the form $f:\bits^d\to\bits$. We also write $v_{|v_j=b}$ to denote the vector $v$ where the $j$-th coordinate is set to $b\in\bits$. We say a feature is \emph{relevant} if changing its value has the potential to change the value of the function, and define
\[
\rel(f)=\{j\in[d] \mid \exists v_{-j} \in \bits^{d-1} \text{ s.t. } f(v_{|v_j=+1})\neq f(v_{|v_j=-1})\}.
\]

We write $(x,y)\sim D$ when $(x,y)$ is drawn from a distribution $D$. We denote the support of a distribution $D$ by $\mathrm{Supp}(D)$.

 In the multitask learning setting, we assume that there exist $n$ tasks (or users) and each task $i\in[n]$ consists of a distribution $D^{(i)}$ over labelled samples in $\bits^d\times \bits$ and a classification function $f^{(i)}$ such that $y=f^{(i)}(x)$ for any $(x,y)\in\mathrm{Supp}(D)$. For each task, we receive a sample set of size $m$, 
 denoted by
 \[
 S^{(i)}= \left\{(x^{(i)}_1, y^{(i)}_1), \ldots, (x^{(i)}_m, y^{(i)}_m)\right\}
 \]
 where $(x^{(i)}_j, y^{(i)}_j)$ is the $j$-th sample drawn i.i.d.\ from $D^{(i)}$ and $y^{(i)}_j=f^{(i)}(x^{(i)}_j)$ is its label.
 Our goal is to design a learning algorithm, which, given $S^{(1)},\ldots, S^{(n)}$, returns \emph{hypotheses} for each task, $h^{(1)}, \ldots, h^{(n)}$, with small average error. 
 For each task $i\in[n]$ and hypothesis $h^{(i)}$, we define the population and training error by
\begin{equation*}
\error^{(i)}(h^{(i)}) \coloneqq \Prx_{(x,y) \sim D^{(i)}}[h^{(i)}(x) \neq y] 
\text{~and~} \widehat{\error}^{(i)}(h^{(i)})=\frac{1}{m}\sum_{(x,y)\in S^{(i)}} \Ind[h^{(i)}(x)\neq y],
\end{equation*}
respectively and the population and training average error by
\begin{equation*}
\avgerror(h^{(1)},\ldots, h^{(n)}) \coloneqq \frac{1}{n} \sum_{i \in [n]} \error^{(i)}(h^{(i)})
\text{~and~}\widehat{\avgerror}(h^{(1)},\ldots, h^{(n)}) \coloneqq \frac{1}{n} \sum_{i \in [n]} \widehat{\error}^{(i)}(h^{(i)}),
\end{equation*}
respectively.
More formally, the single task setting, where $n=1$, is equivalent to the PAC (\emph{probably approximately correct}) learning model (in its realizable case), introduced by \cite{Valiant84}, where our goal is to return a single hypothesis $h^{(1)}$ so that with probability at least $1-\delta$ over the randomness of the dataset and the algorithm, $\error^{(1)}(h^{(1)})\leq\varepsilon$, for given accuracy parameters $\varepsilon,\delta \in (0,1)$. Similarly, in the multitask setting, our goal is to return hypotheses $h^{(1)}, \ldots, h^{(n)}$ so that with probability at least $1-\delta$, $\avgerror(h^{(1)},\ldots, h^{(n)})\leq \varepsilon$.


Multitask learning can be more sample-efficient overall than single-task learning when the tasks are related. Intuitively, if samples for one task are informative for learning a good hypothesis for another task, then pooling all samples to learn all tasks simultaneously may require less samples in total than learning each task separately. In order to formalize the relationship between tasks, we adopt one of the standard assumptions in the literature, that of a \emph{shared-feature representation}. More specifically, this representation will take the form of a small subset of the features that contains the relevant variables for every task (although we can consider other constraints on the set of relevant features).

\begin{definition}[Multitask Learning under Shared-Feature Representation]\label{def:mtl}
Let $\mcC$ be a class of functions $f:\bits^d\to\bits$ and let $\mcV\subseteq 2^{[d]}$ be a collection of subsets of $[d]$. 
We say that $\mcC$ is \emph{$\mcV$-multitask learnable for a class of distributions $\mcD$ with $n$ tasks, $m$ samples-per-task, and accuracy parameters $\varepsilon,\delta\in(0,1)$} 
if there exists algorithm $\mcA$ such that $\forall f^{(1)}, \ldots, f^{(n)}\in\mcC$ that satisfy $\bigcup_{i\in[n]} \rel(f^{(i)})\in \mcV$, 
$\forall D^{(1)},\ldots, D^{(n)} \in \mcD$, 
given $m$ i.i.d.\ samples from each $D^{(i)}$ labeled by $f^{(i)}$, 
returns hypotheses $h^{(1)}, \ldots, h^{(n)}$ such that with probability at least $1-\delta$ over the randomness of the samples and the algorithm, $\avgerror(h^{(1)}, \ldots, h^{(n)}) \leq \varepsilon$, i.e.,
\[\frac{1}{n}\sum_{i\in [n]}\Prx_{(x,y)\sim D^{(i)}}[h^{(i)}(x)\neq y]\leq \varepsilon.\]
\end{definition}

Note that, under this definition, it is not required that all features in $V$ are relevant for all tasks but rather that for each task, the relevant features belong in $V$.




Instead of minimizing the average error among all $n$ tasks, another natural but stronger requirement would be to minimize the maximum error per task: $\max_{i \in [n]} \error^{(i)}$. This cannot be achieved under the shared-feature representation assumption in the general case. 
Suppose that $n-1$ tasks only depend on a single variable whereas the first task depends on the remaining $k-1$ relevant variables. In this case, only the samples of $S^{(1)}$ are informative for the first task and so returning a hypothesis $h^{(1)}$ with $\error^{(1)}(h^{(1)})\leq \eps$ would require $m$ to be as large as required for the single-task setting (for example, for $(k-1)$-sparse halfspaces, $m=O(k\log(d))$). This is in contrast to other settings (e.g. the collaborative learning setting~\cite{BlumHPQ17}) which however make much stronger assumptions on the relationship between tasks.

\paragraph{Comparison with attribute-efficient learning}
We give a formal definition of attribute-efficient learning~\cite{Littlestone87} here. In~\Cref{sec:lower}, we will construct a class of functions and distribution for which attribute-efficient learning is feasible in polynomial time but multitask learning is not, unless the total number of samples is much larger.
\begin{definition}[Attribute-Efficient Learning]\label{def:ael}
Let $\mcC$ be a class of functions $f:\bits^d\to\bits$. 
Let $\mathrm{len}(\mcC)$ denote the description length of the class.\footnote{We will use a binary encoding scheme.}
We say that $\mcC$ is \emph{attribute-efficient learnable for a class of distributions $\mcD$ with accuracy parameters $\varepsilon,\delta\in(0,1)$}, if there exists a $\mathrm{poly}(d)$-time algorithm $\mcA$ such that $\forall f\in\mcC$, $\forall D\in\mcD$, given $s=\mathrm{poly}(\mathrm{len}(\mcC))$ samples from $D$ labeled by $f$, returns a hypothesis $h$ such that with probability at most $1-\delta$ over the randomness of the sample and the algorithm, $\Prx_{(x,y)\sim D} [h(x)\neq y]\leq \varepsilon$.
\end{definition}
For example, in the case where the concept class $\mcC$ is the class of parity functions on at most $k$ features, the size of the class is $|C|=O\left(\binom{d}{k}\cdot 2^k\right)$ and its description length is $\mathrm{len}(\mcC)=\log|\mcC|=O(k\log(d))$. 
So in this case, attribute-efficient learning $\mcC$ would require a learning algorithm with sample complexity $s = \mathrm{poly}(k\log(d))$. 
As mentioned above, multitask learning with $n$ tasks and $m = s/n$ samples-per-task is strictly more general and is reduced to the attribute-efficient setting  when all tasks have the same classifier and distribution over unlabelled examples. In this example, we would be interested in designing a multitask learning algorithm with $n$ tasks, each having a (potentially different) parity function over a subset of the shared features $V\subseteq [d]$, $|V|\leq k$, and a distribution $D^{(i)}$ over $\bits^d\times \bits$, with $m=\mathrm{poly}(k\log(d))/n$ samples-per-task. What is more, since we consider the total number of features $d$ to be too high, we would be interested in multitask learning algorithms which are accurate even in the regime where $m\ll \log(d) <s$.

\section{Efficient multitask learning of halfspaces}
In this section, we present a multitask learning algorithm for the case where each task is a large-margin halfspace classifier over some subset of a common set of features $V\subseteq [d]$ such that $|V|\leq k$. For each task $i\in[n]$, we have a function $f^{(i)}:\bits^d \to \bits$ of the form $f^{(i)}(x) = \sign(\theta^{(i)}\cdot x)$, where $\theta^{(i)}\in\R^d$. For simplicity, we only consider linear separators through the origin.
Furthermore, we assume that each task's classifier $f^{(i)}$ is such that no example $x\in\bits^d$ falls too close to the boundary of the halfspace $\theta^{(i)}$. That is, we assume that all classifiers $f^{(i)}$ are \emph{halfspaces with margin $\gamma$}
, as defined below.

\begin{definition}\label{def:margin}
    Let $f:\bits^d \to \bits$ be a classifier of the form $f(x) = \sign(\theta\cdot x)$. 
    We say that such an $f$ is \emph{a halfspace with margin $\gamma$}
    if it holds that $\forall x\in \bits^d$, $$\frac{|\theta\cdot x|}{\|\theta\|_2}\geq \gamma.$$ We call $\gamma$ the \emph{margin}.
    
\end{definition}
Note that, \emph{for a fixed $x\in\bits^d$}, this is a weaker condition than the standard large-margin assumption that is used in single-task learning. The standard large-margin assumption requires that $\frac{|\theta\cdot x|}{\|\theta\|_2\|x\|_2}\geq \gamma$ for all $x\in\mathrm{Supp}(D)$. In our setting $\|x\|_2=\sqrt{d}$ and thus our assumption, for fixed $x\in\bits^d$, is weaker. 
In fact, under the standard assumption, a single task can already be solved using a sample size of $\tilde{O}(1/\gamma^2\eps)$ (see \cite{Blum06} and references therein), which is independent of $d$, yet if we executed the same analysis under our assumption the bound would translate to $\tilde{O}(d/\gamma^2\eps)$. 
In general, the two assumptions are incomparable, since the standard large-margin assumption depends on the distribution $D$ and asks that there exists a margin $\forall x\in \mathrm{Supp}(D)$, whereas ours does so $\forall x\in\bits^d$.

The main theorem of this section is the following.
\begin{theorem}[Large-margin halfspaces are multitask learnable]\label{th:upperbound}
Let $\mcV_k=\{V\subseteq [d] \mid |V|\leq k\}$. Let $\mcC_{\gamma}$ be the class of halfspaces with margin $\gamma$ (\Cref{def:margin}). For any $\eps,\delta \in (0,1)$,
\begin{align*}
    m &= \Omega\left(\frac{k^2\log^2(1/\eps)}{\gamma^2\eps} \right),\\
    nm &= \Omega\left(\frac{k^2 \log (d)\log(1/\eps)}{\gamma^2\eps} + \frac{\log(1/\delta)}{\eps}\right).
\end{align*}
Class $\mcC_{\gamma}$ is $\mcV_k$-multitask learnable with $n$ tasks, $m$ samples-per-task, and accuracy parameters $\varepsilon,\delta\in(0,1)$, in time $O(\frac{nmdk^2\log(1/\eps)}{\gamma^2})$.
\end{theorem}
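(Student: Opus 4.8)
The plan is to run AdaBoost on the \emph{pooled} sample, where a weak hypothesis is a single coordinate $x_j$ equipped with a per-task sign, and then to observe that all $n$ tasks end up boosting over a \emph{common} pool of only $T=O(k^2\log(1/\eps)/\gamma^2)$ coordinates; this shared structure is what makes the per-task sample complexity independent of $d$. We begin with weak learnability. First reduce to the case $\mathrm{supp}(\theta^{(i)})\subseteq V$ for every $i$: if $f^{(i)}=\sign(\theta^{(i)}\cdot x)$ has margin $\gamma$ with $\rel(f^{(i)})\subseteq V$, set $R=\rel(f^{(i)})$, $\bar R=[d]\setminus R$, and let $\theta^R$ be the restriction of $\theta^{(i)}$ to $R$. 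Since $f^{(i)}$ is independent of the coordinates in $\bar R$, choosing them to attain $\sum_{j\in\bar R}\theta^{(i)}_j x_j=\pm\|\theta^{(i)}_{\bar R}\|_1$ cannot change $\sign(\theta^{(i)}\cdot x)$, which forces $|\theta^R\cdot x|\ge\|\theta^{(i)}_{\bar R}\|_1$ and hence $|\theta^R\cdot x|\ge\tfrac12|\theta^{(i)}\cdot x|\ge\tfrac{\gamma}{2}\|\theta^{(i)}\|_2\ge\tfrac{\gamma}{2}\|\theta^R\|_2$ for all $x$; so $\theta^R$ computes $f^{(i)}$ with margin $\ge\gamma/2$ and is supported on $V$. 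Now the key lemma: for any distribution $D$ over $\bits^d\times\bits$ realizable by $f^{(i)}$, writing $u^{(i)}=\theta^R/\|\theta^R\|_2$ and using $y(\theta^R\cdot x)=|\theta^R\cdot x|\ge\tfrac{\gamma}{2}\|\theta^R\|_2$,
\[
\tfrac{\gamma}{2}\ \le\ \mathbb{E}_{(x,y)\sim D}\big[y\,(u^{(i)}\cdot x)\big]\ =\ \sum_{j\in V}u^{(i)}_j\,\mathbb{E}_D[y x_j]\ \le\ \sum_{j\in V}\big|\mathbb{E}_D[y x_j]\big|,
\]
since $|u^{(i)}_j|\le 1$. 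Averaging this inequality over any task weights $p_1,\dots,p_n\ge 0$ with $\sum_i p_i=1$ and using $|V|\le k$, there is a coordinate $j^\star\in V$ with $\sum_i p_i\,|\mathbb{E}_{D^{(i)}}[y x_{j^\star}]|\ge\tfrac{\gamma}{2k}$; equivalently the tuple of hypotheses $x\mapsto\sign(\mathbb{E}_{D^{(i)}}[y x_{j^\star}])\,x_{j^\star}$ has error at most $\tfrac12-\tfrac{\gamma}{4k}$ under the pooled distribution on $\{(i,x,y)\}$ with these task marginals.

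\textbf{Multitask boosting.} View the pooled sample as $nm$ labeled points $(i,x,y)$ with $(x,y)\in S^{(i)}$, and run ordinary AdaBoost on them, where at round $t$ the weak hypothesis is a coordinate $j_t\in[d]$ together with a sign vector $(b^{(i)}_t)_{i\in[n]}$, predicting $b^{(i)}_t x_{j_t}$ on points of task $i$. The reweighted empirical distributions stay realizable by the $f^{(i)}$, so the lemma above guarantees at every round a weak hypothesis with error $\le\tfrac12-\tfrac{\gamma}{4k}$; the algorithm finds the best one by brute force over $j\in[d]$ and the optimal sign per task, in $O(ndm)$ time per round. The standard AdaBoost potential bound then gives, after $T=O\big(k^2\log(1/\eps)/\gamma^2\big)$ rounds, that $\hat h^{(i)}(x)=\sign\big(\sum_{t\le T}\alpha_t b^{(i)}_t x_{j_t}\big)$ has pooled empirical error $\le\eps/2$, and the pooled empirical error is exactly $\widehat{\avgerror}(\hat h^{(1)},\dots,\hat h^{(n)})$. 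Crucially the coordinates $j_t$ and the coefficients $\alpha_t$ are shared across all tasks; only the signs $b^{(i)}_t$ differ.

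\textbf{Generalization and accounting.} An output tuple is specified by the $T$ shared coordinates $j_1,\dots,j_T\in[d]$ ($T\log d$ bits) together with, for each task, a halfspace over those $T$ coordinates, which by Sauer's lemma is pinned down on $m$ points by $O(T\log m)$ bits; so the family of possible output tuples has size $2^N$ with $N=O(T\log d+nT\log m)$. A relative-deviation (multiplicative Chernoff) bound shows that any fixed tuple with $\avgerror>\eps$ has $\widehat{\avgerror}\le\eps/2$ with probability $e^{-\Omega(\eps\, nm)}$; a union bound over the $2^N$ tuples then gives $\avgerror(\hat h^{(1)},\dots,\hat h^{(n)})\le\eps$ with probability $\ge 1-\delta$ once $nm=\Omega\big((N+\log(1/\delta))/\eps\big)$. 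Plugging in $T=O(k^2\log(1/\eps)/\gamma^2)$ and $\log m=O(\log(k/\gamma\eps))$ yields the stated requirements $m=\Omega(k^2\log^2(1/\eps)/\gamma^2\eps)$ and $nm=\Omega(k^2\log(d)\log(1/\eps)/\gamma^2\eps+\log(1/\delta)/\eps)$, and the running time is $T$ rounds at $O(ndm)$ each, i.e.\ $O(nmdk^2\log(1/\eps)/\gamma^2)$.

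The routine parts are the weak-learnability lemma (immediate from the margin and sparsity) and the boosting step (AdaBoost essentially verbatim on the task-augmented sample space). The delicate point is the generalization accounting: we must see that the description length of an output tuple splits into a $d$-dependent term that is \emph{shared} ($T\log d$, paid once) and a per-task term that is \emph{$d$-free} ($O(T\log m)$ each). This split is exactly what the shared-representation assumption buys, and it is forced by the algorithmic choice of committing to a single coordinate per round for \emph{all} tasks simultaneously---if each task were boosted with its own coordinates, the $\log d$ term would be multiplied by $n$ and the per-task bound would pick up a $\log d$ factor, killing the improvement over solving each task separately.
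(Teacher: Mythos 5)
Your proposal follows essentially the same route as the paper: a discriminator-lemma weak learner that commits to one shared coordinate per round (with the sign/step size absorbed per task), a simultaneous AdaBoost potential argument giving $T=O(k^2\log(1/\eps)/\gamma^2)$ rounds, and a generalization bound whose log-cardinality splits into a shared $T\log d$ term plus a $d$-free per-task term, exactly as in the paper's $\log|\mcV_T| + n\cdot\VC$ accounting. The one corner cut is in the last step: your ``$2^N$ possible output tuples'' are behaviors on the training sample rather than a fixed finite class of functions, so the union bound against a per-tuple Chernoff bound needs the standard double-sample/symmetrization argument to be made rigorous (this is precisely what \Cref{lem:shattering-to-generalization-multi} supplies); with that filled in, your argument matches the paper's.
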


In particular, as long as $n = \Omega(\log d)$ and $\delta = d^{-O(1)}$, having $m =  O\left(\frac{k^2\log^2(1/\eps)}{\gamma^2\eps} \right)$ samples-per-task suffices. To prove~\Cref{th:upperbound}, we need the following fact that says that large-margin classifiers have a feature that is highly correlated with the label. This feature will serve as a weak learner in our analysis.

\begin{fact}[Discriminator Lemma]
    \label{fact:disc-lemma}
    Let $f:\bits^d\to \bits$ be a halfspace classifier over features in $V$, such that $|V|\leq k$. That is, $f(x) = \sign(\theta\cdot x)$ and $\theta_j=0$ for all $j\notin V$. For every distribution $D$ over samples, there exists a feature $\ell\in V$ such that $$|\Ex_{(x,y)\sim D}[f(x)\cdot x_{\ell}]|\geq \frac{\gamma}{\sqrt{k}}.$$
\end{fact}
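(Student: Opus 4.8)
The plan is to relate the average correlation of $f$ with the linear functional $x\mapsto \theta\cdot x$ back to the margin, and then spread that correlation across the at most $k$ relevant coordinates. First I would note that since $f(x)=\sign(\theta\cdot x)$, we have the pointwise identity $f(x)\,(\theta\cdot x)=|\theta\cdot x|$, so the margin hypothesis gives $f(x)\,(\theta\cdot x)\ge \gamma\,\|\theta\|_2$ for every $x\in\bits^d$. Taking expectations over the marginal of $D$ on $x$ (the label $y=f(x)$ is irrelevant here),
\[
\Ex_{(x,y)\sim D}\big[f(x)\,(\theta\cdot x)\big]\;\ge\;\gamma\,\|\theta\|_2.
\]

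Next I would expand the left-hand side by linearity and use that $\theta$ is supported on $V$:
\[
\Ex_{(x,y)\sim D}\big[f(x)\,(\theta\cdot x)\big]=\sum_{j\in V}\theta_j\,\Ex_{(x,y)\sim D}[f(x)\,x_j]\;\le\;\Big(\max_{\ell\in V}\big|\Ex_{(x,y)\sim D}[f(x)\,x_\ell]\big|\Big)\sum_{j\in V}|\theta_j|.
\]
Since $\theta$ has at most $k$ nonzero coordinates, Cauchy--Schwarz gives $\sum_{j\in V}|\theta_j|=\|\theta\|_1\le\sqrt{k}\,\|\theta\|_2$. Combining this with the two displays and cancelling the factor $\|\theta\|_2>0$ yields $\max_{\ell\in V}\big|\Ex_{(x,y)\sim D}[f(x)\,x_\ell]\big|\ge \gamma/\sqrt{k}$, which is exactly the claim.

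There is essentially no real obstacle here; the two points to be careful about are that the margin inequality is assumed to hold at \emph{every} point of the hypercube, so it survives taking an expectation under an arbitrary distribution $D$, and that it is the sparsity of $\theta$ (support size $\le k$) that converts the $\ell_1$-norm bound into the $\sqrt{k}$ factor via Cauchy--Schwarz. One could equally state the conclusion with $y$ in place of $f(x)$, which would follow identically using realizability, but as written only the $x$-marginal of $D$ enters.
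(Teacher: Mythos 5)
Your proof is correct and follows essentially the same route as the paper's: both use the pointwise identity $f(x)(\theta\cdot x)=|\theta\cdot x|\ge\gamma\|\theta\|_2$, expand the expectation by linearity over the support $V$, bound it by $\|\theta\|_1\cdot\max_{\ell\in V}|\Ex[f(x)x_\ell]|$, and finish with $\|\theta\|_1\le\sqrt{k}\,\|\theta\|_2$. No gaps.
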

There exist several similar statements in the literature, see for example~\cite{HajnalMPTS93}. We prove this version here for completeness.
\begin{proof}
By assumption, $\forall x\in \bits^d$, $\frac{|\theta\cdot x|}{\|\theta\|_2}\geq \gamma$. Then, for every distribution $D$, we have that
\begin{align*}
    \|\theta\|_2 \gamma 
    & \leq \Ex_{(x,y)\sim D}[|\theta\cdot x|]\\
    & = \Ex_{(x,y)\sim D}[f(x)\cdot \theta\cdot x]\\
    & =\sum_{j\in V} (\theta_j \Ex_{(x,y)\sim D}[f(x)\cdot x_j]) \\
    & \leq \|\theta\|_1 \max_{j\in V} |\Ex_{(x,y)\sim D}[f(x)\cdot x_j]|
\end{align*}
The proof is complete by observing that $\frac{\|\theta\|_2}{\|\theta\|_1}\geq \frac{1}{\sqrt{|V|}}\geq \frac{1}{\sqrt{k}}$.
\end{proof}

\begin{figure}[H]
  \captionsetup{width=.9\linewidth}
\begin{tcolorbox}[colback = white,arc=1mm, boxrule=0.25mm]
\vspace{3pt} 

{\sc Boost}$(S^{(1)}, \ldots, S^{(n)}, \mcH, t)$ :  \vspace{6pt} \\
\textbf{Input:} Samples $S^{(1)}, \ldots, S^{(n)}$ each with $m$ points, a concept class $\mcH$, and a step count $t$.\\
\textbf{Output:} A hypothesis for each of the $n$ tasks. \\

\ \ Initialize the hypotheses $h^{(1)}, \ldots, h^{(n)}$ each to the constant $0$ functions. \vspace{4pt} 

\ \ Repeat $t$ times: 
\vspace{-3pt}

\begin{enumerate}
\item (Reweight points). For each $i \in [n]$ and $j \in [m]$, set
\begin{equation*}
    w^{(i)}_j = \exp(-y^{(i)}_j \cdot h^{(i)}(x^{(i)}_j))
\end{equation*}
and for each $i \in [n]$ set
$
    W^{(i)} = \sum_{j \in [m]} w^{(i)}_j.
$
\item (Choose a weak learner). Choose $h^\star$ to maximize
\begin{equation}
    \label{eq:pseudocode-advantage}
    h^\star = \argmax_{h \in \mathcal{H}} \sum_{i \in [n]} W^{(i)} \cdot \left(\sum_{j \in [m]} \frac{w^{(i)}_j}{W^{(i)}} \cdot y_j^{(i)} h(x_j^{(i)})\right)^2
\end{equation}

\item (Update hypotheses). For each $i \in [n]$, update $h^{(i)} \leftarrow h^{(i)} + \alpha^{(i)} \cdot h^\star$ where
\begin{equation*}
    \alpha^{(i)} = \frac{1}{2} \ln \left(\frac{\sum_{j \in [m]} w_j^{(i)} \cdot \Ind[ y_j^{(i)} = h^\star(x_j^{(i)})]}{\sum_{j \in [m]} w_j^{(i)} \cdot \Ind[ y_j^{(i)} \neq h^\star(x_j^{(i)})]}\right).
\end{equation*}
\end{enumerate}
\ \ Output $h^{(1)}, \ldots, h^{(n)}$.
\end{tcolorbox}
\caption{Pseudocode for simultaneous boosting}
\label{fig:pseudocode}
\end{figure}

Our simultaneous boosting algorithm will use a generalization of the weak-learning hypothesis.
\begin{definition}[Simultaneous weak-learning assumption]
    A class of weak learners, $\mcH$, satisfies the $\Gamma$-simultaneous weak-learning assumption for functions $f^{(1)}, \ldots, f^{(n)}$, if for all input distributions $D^{(1)}, \ldots, D^{(n)}$ and nonnegative weights $w_1, \ldots, w_n$, there exists $h \in \mcH$ satisfying
    \begin{equation*}
        \sum_{i \in [n]} w_i \cdot \Ex_{(x,y) \sim D^{(i)}}[f^{(i)}(x)h(x)]^2 \geq \Gamma \cdot \sum_{i \in [n]}w_i.
    \end{equation*}
\end{definition}

\begin{lemma}[Simultaneous discriminator lemma]
    \label{lem:multi-disc}
    Let $f^{(1)}, \ldots, f^{(n)}: \bits^d \to \bits$ be halfspaces with margin $\gamma$ over a set of features $V\subseteq [d]$, such that $|V|\leq k$. Then, the class of single feature projection functions, $\mcH_\proj \coloneqq \{x \mapsto x_\ell \mid \ell \in [d]\}$ satisfies the $(\Gamma = \frac{\gamma^2}{k^2})$-simultaneous weak-learning assumption for functions $f^{(1)}, \ldots, f^{(n)}$
\end{lemma}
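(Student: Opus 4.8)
The plan is to bootstrap the single-task Discriminator Lemma (\Cref{fact:disc-lemma}) into a statement about a weighted average over tasks, using crucially that the relevant features of \emph{all} $n$ tasks live in the common set $V$ with $|V| \leq k$. Fix arbitrary distributions $D^{(1)}, \ldots, D^{(n)}$ and nonnegative weights $w_1, \ldots, w_n$, and abbreviate $c^{(i)}_\ell \coloneqq \Ex_{(x,y)\sim D^{(i)}}[f^{(i)}(x)\,x_\ell]$ for $i \in [n]$ and $\ell \in [d]$. First I would apply \Cref{fact:disc-lemma} once per task: since each $f^{(i)}$ is a margin-$\gamma$ halfspace supported on $V$, there is a feature $\ell_i \in V$ with $(c^{(i)}_{\ell_i})^2 \geq \gamma^2/k$.

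The heart of the argument is then a double-counting step over $V$. All terms $w_i\,(c^{(i)}_\ell)^2$ are nonnegative and each $\ell_i \in V$, so
\begin{equation*}
    \sum_{\ell \in V}\, \sum_{i \in [n]} w_i\, (c^{(i)}_\ell)^2 \;\geq\; \sum_{i \in [n]} w_i\, (c^{(i)}_{\ell_i})^2 \;\geq\; \frac{\gamma^2}{k}\sum_{i \in [n]} w_i .
\end{equation*}
Since there are at most $k$ features in $V$, averaging produces some $\ell^\star \in V$ with $\sum_{i \in [n]} w_i\, (c^{(i)}_{\ell^\star})^2 \geq \frac{\gamma^2}{k^2}\sum_{i \in [n]} w_i$, and the projection $x \mapsto x_{\ell^\star}$, which lies in $\mcH_\proj$, is the desired weak learner because $\Ex_{(x,y)\sim D^{(i)}}[f^{(i)}(x)\,x_{\ell^\star}]^2 = (c^{(i)}_{\ell^\star})^2$.

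I do not expect a real obstacle here; the one subtlety to watch is that \Cref{fact:disc-lemma} only supplies a \emph{per-task} good feature $\ell_i$, which may vary with $i$, so one cannot simply union over $i$. The shared-representation hypothesis is exactly what rescues us: confining every $\ell_i$ to a set of size $\leq k$ forces one feature to be good for at least a $1/k$ fraction of the weighted tasks. (A slightly sharper version of \Cref{fact:disc-lemma}---using Cauchy--Schwarz to get $\sum_{\ell \in V}(c^{(i)}_\ell)^2 \geq \gamma^2$ rather than extracting a single large coordinate---would even yield $\Gamma = \gamma^2/k$, but the stated bound $\gamma^2/k^2$ is all that is needed downstream.)
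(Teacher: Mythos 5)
Your proof is correct and is essentially the paper's argument: both apply the single-task Discriminator Lemma per task and then average over the at most $k$ relevant features (the paper phrases this as ``max over $h\in\mcH_{\mathrm{relevant}}$ is at least the average,'' which is exactly your double-counting step). Your parenthetical that Cauchy--Schwarz would sharpen the bound to $\Gamma=\gamma^2/k$ is also a valid observation, though as you note the weaker $\gamma^2/k^2$ suffices.
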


\begin{proof}
    Let $\mcH_{\mathrm{relevant}} \subseteq \mcH_\proj$ be the projection functions corresponding to the $|V|\leq k$ relevant features. Then,
    \begin{align*}
        \max_{h \in \mcH_{\mathrm{relevant}}} \left(\sum_{i \in [n]} w_i \cdot \Ex_{x \sim D^{(i)}}[f^{(i)}(x)h(x)]^2 \right)
        &\geq \frac{1}{|\mcH_{\mathrm{relevant}}|} \cdot \sum_{h \in \mcH_{\mathrm{relevant}}}  \sum_{i \in [n]} w_i \cdot \Ex_{x \sim D^{(i)}}[f^{(i)}(x)h(x)]^2 \\
        & \geq \frac{1}{k} \cdot \sum_{i \in [n]} w_i \cdot \max_{h \in \mcH_{\mathrm{relevant}}}  \left(\Ex_{x \sim D^{(i)}}[f^{(i)}(x)h(x)]^2\right)\\
        &\geq \frac{1}{k} \cdot \sum_{i \in [n]} w_i \cdot \frac{\gamma^2}{k} \tag{\Cref{fact:disc-lemma}} \\
        & \geq \frac{\gamma^2}{k^2}  \tag{$\sum_{i \in [n]} w_i = 1$}
    \end{align*}
\end{proof}


\begin{lemma}[Simultaneous boosting fits a training set]
    \label{lem:boost-train-acc}
    For any samples $S^{(1)},\ldots, S^{(n)}$ of size $m$, weak-learning class $\mcH$, and number of steps $t$, let $h^{(1)}, \ldots, h^{(n)}$ be the output returned by {\sc Boost}$(S^{(1)}, \ldots, S^{(n)}, \mcH, t)$. For each $s \in [t]$, let $\Gamma_s \in [0,1]$ be unique value that satisfies the following expression when $h^{\star}$ is chosen in the $s^{\text{th}}$ iteration:
    \begin{equation*}
        \sum_{i \in [n]} W^{(i)} \cdot \left(\sum_{j \in [m]} \frac{w^{(i)}_j}{W^{(i)}} \cdot y_j^{(i)} h^\star(x_j^{(i)})\right)^2 = \Gamma_s \cdot \sum_{i \in [n]} W^{(i)}
    \end{equation*}
    Then,
    \begin{equation*}
        \frac{1}{nm} \sum_{i \in [n]} \sum_{j \in [m]} \Ind[\sign(h^{(i)}(x^{(i)}_j)) \neq y^{(i)}_j] \leq \prod_{s \in [t]}\left(1 - \frac{\Gamma_s}{2}\right)
    \end{equation*}
\end{lemma}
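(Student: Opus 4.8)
The plan is to mimic the classical potential-function analysis of AdaBoost, using as potential a per-task exponential loss summed over the $n$ tasks. For $s\in\{0,1,\dots,t\}$ write $h^{(i)}_s$ for the hypothesis maintained for task $i$ after $s$ iterations of {\sc Boost} (so $h^{(i)}_0\equiv 0$) and set
\[
  \Phi_s \;\coloneqq\; \sum_{i\in[n]}\sum_{j\in[m]} \exp\!\big(-y^{(i)}_j\, h^{(i)}_s(x^{(i)}_j)\big) \;=\; \sum_{i\in[n]} W^{(i)},
\]
the right-hand $W^{(i)}$ being exactly the quantity computed in the reweighting step of iteration $s+1$; note $\Phi_0=nm$. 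The proof then has two parts. First, since $\Ind[\sign(z)\neq y]\le e^{-yz}$ for every $z\in\R$ and $y\in\bits$, the left-hand side of the lemma is at most $\Phi_t/(nm)$, so it suffices to bound $\Phi_t$. Second, I will prove a one-step contraction $\Phi_s\le (1-\Gamma_s/2)\,\Phi_{s-1}$; telescoping this over $s\in[t]$ and dividing by $nm=\Phi_0$ yields the stated product bound.

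The heart of the argument is the one-step contraction. Fix an iteration $s$, let $h^\star\in\mcH$ be the weak learner selected by \Cref{eq:pseudocode-advantage}, and recall $h^\star$ is $\bits$-valued, so $y^{(i)}_j h^\star(x^{(i)}_j)\in\bits$. For each task $i$, split its weight by whether $h^\star$ agrees with the label, $W^{(i)}_+=\sum_{j:\,y^{(i)}_j=h^\star(x^{(i)}_j)} w^{(i)}_j$ and $W^{(i)}_- = W^{(i)}-W^{(i)}_+$, and let $\rho^{(i)} = \sum_{j}\frac{w^{(i)}_j}{W^{(i)}}\, y^{(i)}_j h^\star(x^{(i)}_j) = (W^{(i)}_+-W^{(i)}_-)/W^{(i)}$ be the bracketed correlation appearing in the definition of $\Gamma_s$. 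The update $h^{(i)}\leftarrow h^{(i)}+\alpha^{(i)}h^\star$ multiplies each weight $w^{(i)}_j$ by $e^{-\alpha^{(i)} y^{(i)}_j h^\star(x^{(i)}_j)}$, so after the update the total weight of task $i$ is $W^{(i)}_+ e^{-\alpha^{(i)}}+W^{(i)}_- e^{\alpha^{(i)}}$. The step size $\alpha^{(i)}=\frac12\ln(W^{(i)}_+/W^{(i)}_-)$ used by {\sc Boost} is exactly the minimizer of this convex function of $\alpha^{(i)}$, and substituting it gives $2\sqrt{W^{(i)}_+W^{(i)}_-} = W^{(i)}\sqrt{1-(\rho^{(i)})^2} \le W^{(i)}\big(1-(\rho^{(i)})^2/2\big)$, using $\sqrt{1-x}\le 1-x/2$ on $[0,1]$ and that $(\rho^{(i)})^2\in[0,1]$. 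Summing over $i$ and combining $\Phi_{s-1}=\sum_i W^{(i)}$ with the defining identity $\sum_i W^{(i)}(\rho^{(i)})^2 = \Gamma_s\sum_i W^{(i)}$ gives $\Phi_s\le(1-\Gamma_s/2)\,\Phi_{s-1}$, as needed.

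I do not expect a serious obstacle—structurally this is the textbook AdaBoost proof—but a few points need care. The conceptual novelty is that a \emph{single} weak learner $h^\star$ is shared across all tasks while each task gets its own step size $\alpha^{(i)}$: the per-task $\alpha^{(i)}$ is what makes each $W^{(i)}$ shrink by the factor $1-(\rho^{(i)})^2/2$, and the greedy rule \Cref{eq:pseudocode-advantage} is precisely what guarantees the resulting weighted sum $\sum_i W^{(i)}(\rho^{(i)})^2=\Gamma_s\Phi_{s-1}$ is large, so that progress is made on average over tasks even when some individual edges $\rho^{(i)}$ are zero. The minor technicalities are: the degenerate case $W^{(i)}_-=0$ (i.e.\ $h^\star$ is perfect on the reweighted sample of task $i$), where one reads $\alpha^{(i)}=+\infty$ and the post-update weight as $0$ so that $W^{(i)}\sqrt{1-(\rho^{(i)})^2}$ is still an upper bound; the convention for $\sign(0)$, which is harmless since then $\Ind[\sign(z)\neq y]=1=e^0$; and the (implicit) assumption that $\mcH$ consists of $\bits$-valued functions, which is what the two-bucket split needs and which holds for the application to $\mcH_\proj$ in \Cref{lem:multi-disc}.
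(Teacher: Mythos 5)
Your proof is correct and follows essentially the same route as the paper's: both track the summed exponential loss $\sum_i W^{(i)}$ as a potential, substitute the per-task optimal step size to get $2\sqrt{W^{(i)}_+W^{(i)}_-}=W^{(i)}\sqrt{1-(\rho^{(i)})^2}\le W^{(i)}(1-(\rho^{(i)})^2/2)$, and invoke the defining identity for $\Gamma_s$ to obtain the one-step contraction, finishing with the standard observation that the 0-1 loss is dominated by the exponential loss. Your explicit handling of the degenerate case $W^{(i)}_-=0$ and the $\sign(0)$ convention is a small bonus of care beyond what the paper writes out.
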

Note that $\Gamma_s$ is defined in such a way that, if $\mcH$ satisfies the $\Gamma$-simultaneous weak-learning assumption, then $\Gamma_s \geq \Gamma$ for all iterations.

\begin{proof}[Proof of \Cref{lem:boost-train-acc}]
    For each $s \in [t]$, let $h^{(1)}_s, \ldots, h^{(n)}_s$ be the hypothesis at the end of the $s^{(th)}$ iteration, with $s = 0$ use to denote the start of the algorithm. We will track the exponential loss,
    \begin{equation*}
        L_s \coloneqq \sum_{i \in [n]} \sum_{j \in [m]} \exp\left(-y_j^{(i)} \cdot h^{(i)}_s(x_j^{(i)})\right).
    \end{equation*}
    We will prove, by induction, that $L_s \leq nm \cdot \prod_{s \in [t]}\left(1 - \frac{\Gamma_s}{2}\right)$. The base case of $s = 0$ holds with equality. For any $s \geq 1$, let $w_j^{(i)}$ and $W^{(i)}$ be the weights during the $s^{\text{th}}$ iteration. Then,
    \begin{align*}
        L_s &=  \sum_{i \in [n]} \sum_{j \in [m]} \exp\left(-y_j^{(i)} \cdot h^{(i)}_s(x_j^{(i)})\right)\\
        &=  \sum_{i \in [n]} \sum_{j \in [m]} \exp\left(-y_j^{(i)} \cdot\left(h^{(i)}_{s-1}(x_j^{(i)}) + \alpha^{(i)}_s \cdot h^\star_s(x_j^{(i)})\right)\right)\\
        &=  \sum_{i \in [n]} \sum_{j \in [m]} w_j^{(i)} \cdot \exp\left(-y_j^{(i)} \cdot \alpha^{(i)}_s \cdot h^\star_s(x_j^{(i)})\right).
    \end{align*}
    For each $i \in [n]$, we'll use the shorthand:
    \begin{align*}
        W^{(i)}_{=} &\coloneqq \sum_{j \in [m]} w_j^{(i)} \cdot \Ind[ y_j^{(i)} = h^\star(x_j^{(i)})], \\
        W^{(i)}_{\neq} &\coloneqq \sum_{j \in [m]} w_j^{(i)} \cdot \Ind[ y_j^{(i)} \neq h^\star(x_j^{(i)})].
    \end{align*}
    Hence,
    \begin{align*}
        L_s &=  \sum_{i \in [n]}  W^{(i)}_{=} \cdot \exp(-\alpha^{(i)}_s) + W^{(i)}_{\neq} \cdot \exp(\alpha^{(i)}_s) \\
        &=  \sum_{i \in [n]}  W^{(i)}_{=} \cdot \sqrt{\frac{W^{(i)}_{\neq}}{W^{(i)}_{=}}} + W^{(i)}_{\neq} \cdot \sqrt{\frac{W^{(i)}_{=}}{W^{(i)}_{\neq}}} \\
        &=  \sum_{i \in [n]} 2 \sqrt{W^{(i)}_{=}\cdot W^{(i)}_{\neq}} 
    \end{align*}
    Note that $W^{(i)}_{=} +  W^{(i)}_{\neq} = W^{(i)}$. For each $i \in [n]$, we define
    \begin{equation*}
        \gamma^{(i)} \coloneqq \sum_{j \in [m]} \frac{w^{(i)}_j}{W^{(i)}} \cdot y_j^{(i)} h^\star(x_j^{(i)}).
    \end{equation*}
    and observe $W_{=}^{(i)} - W_{\neq}^{(i)} = W^{(i)} \cdot \gamma^{(i)}$. As a result, we have that $W^{(i)}_{=} = W^{(i)}/2 \cdot (1 + \gamma^{(i)})$ and $W^{(i)}_{\neq} = W^{(i)}/2 \cdot (1 - \gamma^{(i)})$. Continuing,
    \begin{align*}
        L_s &=  \sum_{i \in [n]} 2 \sqrt{W^{(i)}_{=}\cdot W^{(i)}_{\neq}}  \\
        &= \sum_{i \in [n]} W^{(i)} \cdot \sqrt{(1 + \gamma^{(i)})\cdot(1 - \gamma^{(i)})} \\
        &\leq  \sum_{i \in [n]} W^{(i)} \cdot (1 - \frac{(\gamma^{(i)})^2}{2} ) \\
        &= L_{s-1} -  \sum_{i \in [n]} W^{(i)} \cdot  \frac{(\gamma^{(i)})^2}{2}  \\
        &= L_{s-1} - \frac{1}{2} \cdot \sum_{i \in [n]} W^{(i)} \left(\sum_{j \in [m]} \frac{w^{(i)}_j}{W^{(i)}} \cdot y_j^{(i)} h^\star(x_j^{(i)}) \right)^2 \\
        &= L_{s-1} - \frac{\Gamma_s}{2} \cdot \sum_{i \in [n]} W^{(i)} \\
        &= L_{s-1}\cdot \left(1 - \frac{\Gamma_s}{2}\right) \tag{$\sum_{i \in [n]} W^{(i)} = L_{s-1}$}
    \end{align*}
    
    Hence, we have that $L_t \leq nm \cdot \prod_{s \in [t]}\left(1 - \frac{\Gamma_s}{2}\right)$. The desired holds because classification error is upper bounded by $L_t/(nm)$.
\end{proof}

As an immediate corollary, we have that if $\mcH$ satisfies the simultaneous weak-learning assumption, for an appropriate choice of $t$, simultaneous boosting will fit the training set with almost perfect accuracy.

\begin{corollary}
    \label{cor:fit-training-set}
    Let any functions $f^{(1)},\ldots,f^{(n)}$ and $\mcH$ be a class of weak-learners satisfying the $\Gamma$-simultaneous weak-learning assumption for $f^{(1)}, \ldots, f^{(n)}$. Then, for any samples $S^{(1)}, \ldots, S^{(n)}$ of size $m$ labeled by $f^{(1)}, \ldots, f^{(n)}$ and $t = O\left(\frac{\log(1/\eps)}{\Gamma}\right)$, {\sc Boost}$(S^{(1)}, \ldots, S^{(n)}, \mcH, t)$ returns hypotheses $h^{(1)},\ldots,h^{(n)}$ such that $\widehat{\avgerror}(h^{(1)},\ldots,h^{(n)})\leq \eps$.
    \footnote{For conciseness, here we slightly abuse notation by writing $\widehat{\avgerror}(h^{(1)},\ldots,h^{(n)})$ to denote the average training error of the classification functions $\mathrm{sign}(h^{(i)}(x))$.}
\end{corollary}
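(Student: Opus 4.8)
The plan is to derive the corollary directly from Lemma~\ref{lem:boost-train-acc}; the only extra ingredient is the observation (already flagged in the remark following that lemma) that the $\Gamma$-simultaneous weak-learning assumption forces $\Gamma_s \geq \Gamma$ in every iteration $s \in [t]$. First I would unpack the objective maximized in step~2 of {\sc Boost}. For each $i \in [n]$, let $\widehat{D}^{(i)}$ be the distribution supported on $S^{(i)}$ that places mass $w^{(i)}_j / W^{(i)}$ on the $j$-th example. Then
\[
\sum_{j \in [m]} \frac{w^{(i)}_j}{W^{(i)}}\, y^{(i)}_j\, h(x^{(i)}_j) \;=\; \Ex_{(x,y)\sim \widehat{D}^{(i)}}[\,y\, h(x)\,] \;=\; \Ex_{x\sim \widehat{D}^{(i)}}[\,f^{(i)}(x)\, h(x)\,],
\]
where the last equality uses that the sample $S^{(i)}$ is labeled by $f^{(i)}$, so $y = f^{(i)}(x)$ on $\mathrm{Supp}(\widehat{D}^{(i)})$. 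Hence the quantity in~\eqref{eq:pseudocode-advantage} equals $\sum_{i\in[n]} W^{(i)} \cdot \Ex_{x\sim\widehat{D}^{(i)}}[f^{(i)}(x)h(x)]^2$.

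Next I would invoke the $\Gamma$-simultaneous weak-learning assumption with input distributions $\widehat{D}^{(1)},\dots,\widehat{D}^{(n)}$ and nonnegative weights $W^{(1)},\dots,W^{(n)}$ (the assumption quantifies over arbitrary nonnegative weights, so there is no need for them to be normalized): it yields some $h \in \mcH$ with $\sum_{i\in[n]} W^{(i)} \cdot \Ex_{x\sim\widehat{D}^{(i)}}[f^{(i)}(x)h(x)]^2 \geq \Gamma \sum_{i\in[n]} W^{(i)}$. Since $h^\star$ is chosen in step~2 to maximize exactly this objective, the value it attains is at least $\Gamma \sum_{i\in[n]} W^{(i)}$, which by the defining equation of $\Gamma_s$ means $\Gamma_s \geq \Gamma$. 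This argument is the same in every iteration, since $\widehat{D}^{(i)}$ is just whatever reweighting of $S^{(i)}$ the algorithm currently holds.

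Finally I would plug this into Lemma~\ref{lem:boost-train-acc}: the average training error of the classifiers $\mathrm{sign}(h^{(i)}(\cdot))$ is at most $\prod_{s\in[t]}\bigl(1-\tfrac{\Gamma_s}{2}\bigr) \leq \bigl(1 - \tfrac{\Gamma}{2}\bigr)^t \leq \exp(-\Gamma t/2)$, using $\Gamma \leq 1$ and $1 - x \leq e^{-x}$. Taking $t = \lceil (2/\Gamma)\ln(1/\eps)\rceil = O(\log(1/\eps)/\Gamma)$ makes the right-hand side at most $\eps$, giving $\widehat{\avgerror}(h^{(1)},\dots,h^{(n)}) \leq \eps$ as claimed. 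I do not expect a genuine obstacle here; the only step needing a little care is the rewriting in the first paragraph, namely recognizing the squared weighted empirical correlations in~\eqref{eq:pseudocode-advantage} as an instance of the simultaneous weak-learning assumption applied to the reweighted empirical distributions, together with the bookkeeping that the (unnormalized) weights $W^{(i)}$ are permitted there.
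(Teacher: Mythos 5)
Your proof is correct and follows exactly the route the paper intends: the paper states this as an ``immediate corollary'' of \Cref{lem:boost-train-acc}, relying on the remark that the $\Gamma$-simultaneous weak-learning assumption forces $\Gamma_s \geq \Gamma$ in every iteration, and your first two paragraphs simply spell out that remark (identifying the reweighted empirical distributions and applying the assumption with unnormalized weights $W^{(i)}$) before doing the standard $\prod_s(1-\Gamma_s/2) \leq e^{-\Gamma t/2} \leq \eps$ calculation. No gaps.
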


We also bound the running time of simultaneous boosting.
\begin{proposition}[Running time]
    \label{prop:time}
    The running time of {\sc Boost}$(S^{(1)}, \ldots, S^{(n)}, \mcH, t)$ when each $S^{(i)}$ has $m$ samples is $O(nmt |\mcH|)$.
\end{proposition}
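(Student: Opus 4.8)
The plan is a routine per-iteration accounting, organized so that no single iteration costs more than $O(nm|\mcH|)$, after which multiplying by $t$ gives the claim. Throughout I treat each arithmetic operation (including $\exp$ and $\ln$) and each single evaluation $h(x)$ for $h\in\mcH$ as unit cost, which is the standard convention and which is literally true in the application of interest, where $\mcH=\mcH_\proj$ consists of coordinate projections $x\mapsto x_\ell$.

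The first thing I would pin down is the data representation. Each hypothesis $h^{(i)}$ is stored as the list of pairs $(\alpha^{(i)}_s,h^\star_s)$ accumulated so far, and---this is the one point that needs care---I would additionally maintain an array of the current per-point margins $v^{(i)}_j \coloneqq h^{(i)}(x^{(i)}_j)$ for all $i\in[n]$, $j\in[m]$. This array is exactly what keeps the bound from blowing up: evaluating $h^{(i)}$ from scratch at a point would take time proportional to the number of rounds so far, inflating the running time by a factor of $t$, whereas immediately after an update $h^{(i)}\leftarrow h^{(i)}+\alpha^{(i)} h^\star$ we can refresh $v^{(i)}_j \leftarrow v^{(i)}_j + \alpha^{(i)} h^\star(x^{(i)}_j)$ in $O(1)$ per entry. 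Initializing the constant-$0$ hypotheses and their margin arrays is $O(nm)$.

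Next I would bound a single iteration. Step 1 (reweight) reads off $w^{(i)}_j = \exp(-y^{(i)}_j v^{(i)}_j)$ and accumulates $W^{(i)}=\sum_{j}w^{(i)}_j$, all in $O(nm)$. Step 2 (choose weak learner) is the dominant cost: for each of the $|\mcH|$ candidates $h$, evaluate $h$ on all $nm$ points and form $\sum_{i\in[n]} W^{(i)}\big(\tfrac{1}{W^{(i)}}\sum_{j\in[m]} w^{(i)}_j y^{(i)}_j h(x^{(i)}_j)\big)^2$ in $O(nm)$ time, then take the $\argmax$; total $O(nm|\mcH|)$. Step 3 (update) computes $W^{(i)}_=$ and $W^{(i)}_{\neq}$ by an $O(m)$ pass per task ($O(nm)$ overall), then $\alpha^{(i)}$ in $O(1)$, appends $(\alpha^{(i)},h^\star)$ to each of the $n$ hypothesis lists in $O(n)$, and refreshes the margin array in $O(nm)$. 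Hence one iteration costs $O(nm|\mcH|)$, dominated by the weak-learner search in Step 2. Summing over the $t$ iterations and adding the $O(nm)$ initialization gives total running time $O(nmt|\mcH|)$, as claimed.

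There is no genuine obstacle in this argument; the only subtlety worth flagging is the incremental maintenance of the per-point margins $v^{(i)}_j$ used in Steps 1 and 3, without which a naive implementation would incur an extra factor of $t$.
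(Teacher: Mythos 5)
Your proof is correct and takes essentially the same route as the paper's: a per-iteration accounting in which the weak-learner search dominates at $O(nm|\mcH|)$, multiplied by $t$ iterations. The only difference is that you spell out the incremental maintenance of the per-point margins, an implementation detail the paper leaves implicit, which is a reasonable (and slightly more careful) addition.
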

\begin{proof}
    The running time is dominated by finding which weak-learner maximizes \Cref{eq:pseudocode-advantage}. To do so so, we can loop over all $|\mcH|$ weak-learners and compute their advantage, which takes time $O(nm)$. This must be done in each of $t$ iterations, given a total runtime of $O(nmt |\mcH|)$.
\end{proof}

The last step in proving \Cref{th:upperbound} is bounding the generalization error. We first prove a general theorem on bounding generalization in the multitask setting using VC dimension in~\Cref{subsec:gen-VC} and then apply it to learning large-margin halfspaces to complete the proof of the theorem in~\Cref{subsec:gen-proof}. 


\subsection{Generalization based on VC dimension}
\label{subsec:gen-VC}

The goal of this section is to prove the following theorem bounding the number of samples needed to generalize in the multitask setting. The formal version is given in \Cref{thm:generalization}.
\begin{theorem}[Generalization in the multitask setting]
    \label{thm:informal-generalization}
    Let $\mcC$ be a class of functions $f: \bits^d \to \bits$ and $\mcV \subseteq 2^{[d]}$ be subsets of features. For $\VC(\mcC \mid \mcV)$ as defined in~\Cref{def:VC-condition}, any $\eps, \delta \in (0,1)$, and
    \begin{align*}
        m &= O\left(\VC(\mcC \mid \mcV) \cdot \frac{\log(1/\eps)}{\eps}\right), \\
        nm &= O\left(\frac{\log |\mcV| + \log(1/\delta)}{\eps}\right),
    \end{align*}
    given random size-$m$ samples for each of $n$ tasks, any $h^{(1)}, \ldots, h^{(n)}\in \mcC$ with a shared-feature representation $V\in\mcV$ 
    with $\widehat{\avgerror}(h^{(1)}, \ldots, h^{(n)})\leq \eps$, 
    will have $\avgerror(h^{(1)}, \ldots, h^{(n)}) \leq 4\eps$ with probability at least $1 - \delta$.
\end{theorem}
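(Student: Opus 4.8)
The plan is to prove the formal version of this theorem by the classical symmetrization (``ghost sample'') argument, but applied to a \emph{product} hypothesis class: think of the whole multitask sample as a single sample of $N = nm$ points, where the $i$-th block of $m$ points is drawn i.i.d.\ from $D^{(i)}$, and think of a tuple $(h^{(1)},\dots,h^{(n)})$ as a single hypothesis that acts block-wise. Under this identification the training and population \emph{average} errors $\widehat{\avgerror}$ and $\avgerror$ are exactly the training and population errors of the tuple, so the claim reduces to the familiar realizable-style statement: with $N=\Omega\big((\log(\text{growth function})+\log(1/\delta))/\eps\big)$, every tuple with training error $\le \eps$ has population error $\le 4\eps$. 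The only genuinely multitask ingredient is that the growth function of the class of allowed tuples \emph{factorizes over tasks}, and it is exactly this factorization that makes the per-task requirement $m$ depend only on $\VC(\mcC\mid \mcV)$ and $\log(1/\eps)$, not on $n$ or $d$.

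In more detail: first, symmetrization. Fix a tuple whose average population error exceeds $4\eps$; on a fresh ``ghost'' multitask sample (again $m$ points per task) its number of mistakes is a sum of independent binomials with mean exceeding $4nm\eps$, so by a multiplicative Chernoff bound its average training error on the ghost sample exceeds $2\eps$ with probability at least $\tfrac12$ (this uses $nm\eps \gtrsim 1$, which follows from the hypothesized lower bound on $nm$ once we assume w.l.o.g.\ $\delta \le \tfrac12$). The standard manipulation then gives
\[
\Pr_{S}\big[\exists\, V\in\mcV,\ (h^{(i)})\in\mcC_V^{\,n}\colon \widehat{\avgerror}_S \le \eps \text{ and } \avgerror > 4\eps\big] \ \le\ 2\,\Pr_{S,S'}\big[\exists\, V,\ (h^{(i)})\colon \widehat{\avgerror}_S \le \eps \text{ and } \widehat{\avgerror}_{S'} > 2\eps\big],
\]
where $\mcC_V := \{f\in\mcC : \rel(f)\subseteq V\}$.

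Next, conditioning on the combined multiset $S\cup S'$ (which has $2m$ points per task), the event on the right depends only on how a tuple labels those $2nm$ points; by Sauer--Shelah each $\mcC_V$ induces at most $\big(\tfrac{2em}{v}\big)^{v}$ labelings on a size-$2m$ block, where $v := \VC(\mcC\mid\mcV)$, so the number of relevant labelings is at most $|\mcV|\cdot\big(\tfrac{2em}{v}\big)^{vn}$. For each \emph{fixed} labeling with, say, $r$ total mistakes on $S\cup S'$, the number landing in the $S$-half is a sum of independent hypergeometric variables with mean $r/2$; the event forces $\le nm\eps$ mistakes in $S$ and $> 2nm\eps$ in $S'$, and in either regime ($r \le 3nm\eps$ or $r > 3nm\eps$) this is a relative deviation of order $r$ from the mean, so the multiplicative Chernoff bound for sampling without replacement makes its probability $\exp(-\Omega(nm\eps))$. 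Union-bounding, the total failure probability is at most $2\,|\mcV|\cdot\big(\tfrac{2em}{v}\big)^{vn}\cdot \exp(-\Omega(nm\eps))$, which is $\le\delta$ as soon as $nm\eps \gtrsim \log|\mcV| + \log(1/\delta)$ and $nm\eps \gtrsim vn\log(2em/v)$. The first is precisely the stated bound on $nm$; the second rearranges to $m \gtrsim (v/\eps)\log(m/v)$, which is implied by $m = \Omega\big(v\log(1/\eps)/\eps\big)$ since then $\log(m/v) = O(\log(1/\eps))$. This gives the theorem with slack in the constant $4$.

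I expect the main obstacle to be making the symmetrization and the per-labeling partition bound interact cleanly with the \emph{average} error rather than per-task errors: one must check that ``$\widehat{\avgerror}_S \le \eps$ versus $\widehat{\avgerror}_{S'} > 2\eps$'' is genuinely a large-deviation event for the random balanced partition of $S\cup S'$ in \emph{both} the small-$r$ and large-$r$ regimes, and carefully track that the growth-function exponent comes out as $vn\log(2em/v)$ so that the factor of $n$ cancels against the $n$ in $nm\eps$, leaving a per-task sample requirement independent of $n$ and $d$. Everything else is the standard realizable-case VC generalization argument.
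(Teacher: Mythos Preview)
Your proposal is correct and follows essentially the same approach as the paper: symmetrization via a ghost sample, a union bound over the multitask growth function $\Pi_{\mcC,\mcV}(n,2m)\le |\mcV|\cdot(2em/v)^{vn}$ obtained by factoring Sauer--Shelah across tasks, and a Chernoff-type bound for the random balanced partition (the paper invokes Hoeffding's MGF comparison for sampling without replacement to handle the sum of per-task hypergeometrics, exactly as you suggest). The only cosmetic differences are that the paper uses Chebyshev rather than Chernoff for the ghost-sample step, and replaces your two-case analysis of the partition with the single observation that $\ell_S\le nm\eps$ and $\ell_T\ge 2nm\eps$ force $\ell_T/\ell\ge 2/3$, so the bad partition has probability at most $\exp(-\ell/20)\le\exp(-nm\eps/10)$.
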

$\VC(\mcC \mid \mcV)$ will correspond to the VC dimension of the concept class once a representation is fixed. It can be substantially smaller than $\VC(\mcC)$. For example, if $\mcC$ is the set of all $k$-sparse halfspaces for $k \ll d$, then $\VC(\mcC) = \Theta(k \log d)$. 
However, if all $n$ tasks correspond to a halfspace over the same $k$ features, then we take $\mcV \coloneqq \{V \subseteq 2^{[d]} \mid |V| \leq k\}$, and have $\VC(\mcC \mid \mcV) = k + 1$. Once a representation is fixed, $\mcC$ just corresponds to halfspaces over a set of $k$ features, which has VC dimension $\Theta(k)$.

\Cref{thm:informal-generalization} roughly speaking, says that each task need only have enough samples to learn assuming the representation $V \in \mcV$ is already known, and the total number of samples should be enough to learn which representation $V$ is used. To formalize \Cref{thm:informal-generalization}, we begin with some basic definitions.
\begin{definition}[Generalization failure probability, single-task setting]
    For a concept class $\mcC$ of functions $f: X \to \bits$, distribution $D$ over $X \times \bits$, error parameter $\eps > 0$, and sample size $m$, we define $\delta_\gen(\mcC, m, \eps, D)$ to be the probability, over a random sample $S$ of $m$ points from $D$, 
    that there exists some $f \in \mcC$ that has at most $\eps$ error on the sample $S$ but for which $\Pr_{(x, y) \sim D}[f(x) \neq y] \geq 4\eps$. We define the generalization failure probability of $\mcC$ with sample size $m$ and error parameter $\eps$ to be
    \begin{equation*}
        \delta_\gen(\mcC, m, \eps) \coloneqq \sup_{\text{distribution } D} \delta_\gen(\mcC, m, \eps, D).
    \end{equation*}
\end{definition}
Any algorithm that returns a hypothesis $h$ within $\mcC$ with less than $\eps$ error on $m$ random samples will learn to $\error < 4\eps$ with probability at least $1 - \delta_\gen(\mcC, m, \eps)$. We extend this notion to the multitask setting.
\begin{definition}[Generalization failure probability, multitask setting]
    For a concept class $\mcC$ of functions $f: \bits^d \to \bits$, $\mcV \subseteq 2^{[d]}$ a collection of subsets of features, distributions $D^{(1)}, \ldots, D^{(n)}$ over $\bits^d \times \bits$, error parameter $\eps$, number of tasks $n$, and samples-per-task $m$, we define $\delta_\gen(\mcC, \mcV, n, m, \eps, D^{(1)}, \ldots, D^{(n)})$ to be the probability over random samples $S^{(i)} \sim (D^{(i)})^m$, $i\in[n]$, that there are $h^{(1)}, \ldots, h^{(n)} \in \mcC$ satisfying $\bigcup_{i\in[n]} \rel(h^{(i)})\in \mcV$ for which 
    $\widehat{\avgerror}(h^{(1)},\ldots,h^{(n)})\leq \eps$
    but for which $\avgerror(h^{(1)}, \ldots, h^{(n)}) \geq 4\eps$, i.e.,
    \[\frac{1}{n}\sum_{i\in [n]}\Prx_{(x, y)\sim D^{(i)}}[h^{(i)}(x)\neq y]\geq 4\eps.\]
    Then, we define,
    \begin{equation*}
        \delta_\gen(\mcC, \mcV, n, m, \eps) \coloneqq \sup_{\text{distributions } D^{(1)}, \ldots, D^{(n)}} \delta_\gen(\mcC, \mcV, n, m, \eps, D^{(1)}, \ldots, D^{(n)}).
    \end{equation*}
\end{definition}

Once again, any algorithm that returns hypotheses $h^{(1)}, \ldots, h^{(n)}$ with average error at most $\eps$ on the training set, will, given $m$ samples per task, with probability $1 - \delta_\gen(\mcC, \mcV, n,m,\eps)$, learn hypotheses with $\avgerror(h^{(1)}, \ldots, h^{(n)}) < 4\eps$. We extend classical results bounding $\delta_\gen(\mcC, m, \eps)$ based on VC dimension to the multitask setting. We begin with the basic definitions of VC theory.

\begin{definition}[VC dimension~\cite{VapnikC71}]
    For any concept class $\mathcal{C}$ and domain $X$, the shattering number of $n$ points is defined as 
    \begin{equation*}
        \Pi_{\mathcal{C}}(n) \coloneqq\max_{x_1, \ldots, x_n \in X} \left|\left\{(c(x_1), \ldots, c(x_n))  : c \in \mcC \right\}\right|
    \end{equation*}
    and is the maximum number of unique assignments functions in the concept class can take on a set of $n$ points. The VC dimension of $\mcC$, denoted     \begin{equation*}
        \VC(\mcC) \coloneqq \sup \{d\in\mathbb{N} :  \Pi_{\mcC}(d) = 2^d\},
    \end{equation*}
    is the cardinality of the largest data set for which every unique assignment to that set is satisfied by a function in $\mcC$.
\end{definition}

In the multitask setting, rather than scaling with the VC dimension of $\mcC$, the number of samples needed per task will scale with the VC dimension after the representation is already known.
\begin{definition}[VC dimension given representation]
    \label{def:VC-condition}
    Given a concept class $\mcC$ of functions $f: \bits^d \to \bits$, and $\mcV \subseteq 2^{[d]}$ a collection of subsets of features, for any $V \subseteq [d]$, let
    \begin{equation*}
        (\mcC \mid V) \coloneqq \{f \in \mcC : \rel(f) \subseteq V\}, 
    \end{equation*}
    be the concepts consistent with a shared representation $V$. We define the VC dimension of $\mcC$ given the representation $\mcV$ to be
    \begin{equation*}
        \VC(\mcC \mid \mcV) \coloneqq \max_{V \in \mcV} \VC(\mcC \mid V).
    \end{equation*}
\end{definition} 
We are most interested in settings where $\VC(\mcC \mid \mcV) \ll \VC(\mcC)$. We can now formalize the main result of this section.

\begin{theorem}[Generalization bound in the multitask setting, formal version of \Cref{thm:informal-generalization}]
    \label{thm:generalization} Given a concept class $\mcC$ of functions $f: \bits^d \to \bits$, $\mcV \subseteq 2^{[d]}$ a collection of subsets of features, and accuracy parameters $\eps, \delta$, set $m,n$
    \begin{align*}
        m &= O\left(\VC(\mcC \mid \mcV)\cdot \frac{\log(1/\eps)}{\eps}\right), \\
        nm &= O\left(\frac{\log |\mcV| + \log(1/\delta)}{\eps}\right).
    \end{align*}
    Then $\delta_\gen(\mcC, \mcV, n, m, \eps) \leq \delta$.
\end{theorem}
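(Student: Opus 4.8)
The plan is to fix a shared representation $V$, run the classical VC/symmetrization argument adapted to the stratified ($m$-per-task) sampling, and finish with a union bound over $\mcV$. Since the event defining $\delta_\gen(\mcC,\mcV,n,m,\eps)$ only concerns tuples $h^{(1)},\dots,h^{(n)}$ with $\bigcup_{i}\rel(h^{(i)})\in\mcV$, a union bound over the at most $|\mcV|$ possible values of that set reduces the claim to showing that, for each fixed $V\in\mcV$, the probability that some $h^{(1)},\dots,h^{(n)}\in(\mcC\mid V)$ satisfy $\widehat{\avgerror}\le\eps$ but $\avgerror\ge4\eps$ is at most $\delta/|\mcV|$. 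Write $d_V \coloneqq \VC(\mcC\mid V)\le\VC(\mcC\mid\mcV)$; the case $d_V=0$ is trivial since then $(\mcC\mid V)$ consists only of constants, so assume $d_V\ge1$. Once $V$ is fixed, each coordinate $h^{(i)}$ ranges over $(\mcC\mid V)$ alone, so by Sauer--Shelah a tuple realizes at most $\Pi_{(\mcC\mid V)}(2m)^n\le(2em/d_V)^{d_V n}$ distinct patterns of correct/incorrect labels on any sample of $2m$ points per task (using $m\gtrsim d_V$, which will hold).

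Next I would symmetrize using a ghost sample. Draw a fresh ghost sample $T^{(i)}$ of $m$ points per task. For any fixed tuple with $\avgerror\ge4\eps$, the quantity $\widehat{\avgerror}_T$ is an average of $nm$ independent Bernoullis of mean $\ge4\eps$, so a multiplicative Chernoff bound gives $\widehat{\avgerror}_T\ge2\eps$ with probability $\ge\tfrac12$ whenever $nm=\Omega(1/\eps)$; hence the probability we must bound is at most twice the probability that some tuple has both $\widehat{\avgerror}_S\le\eps$ and $\widehat{\avgerror}_T\ge2\eps$. Now condition on the $nm$ unordered pairs obtained by matching the $j$-th point of $S^{(i)}$ with the $j$-th point of $T^{(i)}$; conditionally, each pair is independently swapped between $S$ and $T$ with probability $\tfrac12$. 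Fix one of the $\le\Pi_{(\mcC\mid V)}(2m)^n$ label-patterns and split its pairs into those wrong on both points ($t$ of them), right on both, and wrong on exactly one ($s$ of them). Then the number of errors falling in $S$ equals $t+Y$ with $Y\sim\mathrm{Bin}(s,\tfrac12)$, of mean $K/2$ where $K=2t+s$ is the number of errors over all $2nm$ points. The bad event forces $t+Y\le\min(\eps nm,\,K-2\eps nm)$, a deviation of $t+Y$ below its mean by at least $\tfrac12\eps nm$; a short case analysis (according as $K\le3\eps nm$ or $K>3\eps nm$) shows the corresponding relative deviation is at least $\tfrac13$ and that $s\ge\eps nm$, so the multiplicative Chernoff bound yields probability at most $e^{-\eps nm/36}$ for that pattern. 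Union-bounding over the patterns, the ghost-sample event has probability at most $\Pi_{(\mcC\mid V)}(2m)^n\,e^{-\eps nm/36}$, so the per-$V$ probability is at most $2\,(2em/d_V)^{d_V n}\,e^{-\eps nm/36}$.

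It then remains to multiply by $|\mcV|$ and take logarithms: it suffices that $\eps nm/36\ge\ln(2|\mcV|/\delta)+d_V n\ln(2em/d_V)$. Choosing $m=\Theta(d_V\log(1/\eps)/\eps)$ makes $2em/d_V=\Theta(\log(1/\eps)/\eps)$ --- the $d_V$ cancels inside the logarithm --- so $d_V n\ln(2em/d_V)=d_V n\cdot O(\log(1/\eps))$, which is at most $\eps nm/72$ once the constant in the choice of $m$ is large enough; the leftover term $\ln(2|\mcV|/\delta)$ is then absorbed as soon as $nm=\Omega((\log|\mcV|+\log(1/\delta))/\eps)$. These are precisely the two stated sample bounds. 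I expect the crux to be the case analysis in the swap step: one must invoke the \emph{multiplicative} Chernoff bound rather than Hoeffding's inequality --- Hoeffding would only give $e^{-\Omega(\eps^2 nm)}$, which would force $nm=\Omega(1/\eps^2)$ and destroy the $\log(1/\eps)$-type dependence --- and one must observe that it is exactly the slack between $4\eps$ and $\eps$ that keeps the relative deviation bounded below by a constant. The remaining work is routine Sauer--Shelah / symmetrization bookkeeping, plus noticing the $d_V$-cancellation in the final display.
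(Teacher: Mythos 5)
Your proposal is correct and follows essentially the same route as the paper: a ghost-sample symmetrization, a pattern count of the form $|\mcV|\cdot \Pi_{(\mcC\mid V)}(2m)^n$ via Sauer--Shelah applied per fixed representation, and a multiplicative (not additive) concentration bound on the swap step that exploits the $\eps$-versus-$4\eps$ gap to keep the dependence at $1/\eps$ rather than $1/\eps^2$. The only substantive difference is that you randomize by independent per-pair swaps, whereas the paper selects $m$ of the $2m$ points per task without replacement and therefore needs Hoeffding's moment-generating-function comparison between sampling with and without replacement; both variants are valid.
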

The proof of \Cref{thm:generalization} follows Blumer, Ehrenfeucht, Haussler, and Warmuth's classical VC generalization bounds \cite{BEHW89}, with appropriate modifications made for the multitask setting. To do so, we need to define an extension of shattering numbers to the multitask setting.
\begin{definition}
    For any function $f:X \to Y$ and sample $S = (x_1, \ldots, x_m) \in X^m$, we use $f(S)$ as shorthand for the vector $(f(x_1), \ldots, f(x_m))$. For any concept class $\mcC$ of functions $f:\bits^d \to \bits$ and $\mcV \subseteq 2^{[d]}$ a collection of subsets of features, we define the shattering number for $n$ tasks and $m$ samples-per-task, denoted $\Pi_{\mathcal{C}, \mathcal{V}}(n, m)$, to be
    \begin{equation*}
        \max_{S^{(1)}, \ldots, S^{(n)} \in (\bits^d)^m} \left|\left\{(f^{(1)}(S^{(1)}), \ldots, f^{(n)}(S^{(n)}))  : f^{(1)}, \ldots, f^{(n)} \in \mcC \text{ and } \cup_{i \in [n]} \rel(f^{(i)}) \in \mcV \right\}\right|
    \end{equation*}
    to be the maximum number of unique assignments for $n$ tasks each with $m$ data points.
\end{definition}

We can now state the main technical lemma of this section.
\begin{lemma}
    \label{lem:shattering-to-generalization-multi}
    For any concept class $\mcC$ of functions $f:\bits^d \to \bits$,  $\mcV \subseteq 2^{[d]}$ a collection of subsets of features, error parameter $\eps > 0$, number of tasks $n$, and samples-per-task $m$, if $nm\eps \geq 2$, then
    \begin{equation*}
        \delta_\gen(\mcC, \mcV, n, m, \eps) \leq 2\cdot \Pi_{\mcC,\mcV}(n,2m) \cdot \exp\left(-nm\eps/10\right).
    \end{equation*}
\end{lemma}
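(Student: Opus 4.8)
The plan is to follow the classical symmetrization (double-sampling / ghost-sample) argument of Blumer--Ehrenfeucht--Haussler--Warmuth, adapted so that bookkeeping is done per task and then aggregated. Fix distributions $D^{(1)},\ldots,D^{(n)}$ and let $\mathcal{B}$ be the ``bad event'' that there exist $h^{(1)},\ldots,h^{(n)} \in \mcC$ with $\cup_i \rel(h^{(i)}) \in \mcV$, $\widehat{\avgerror}(h^{(1)},\ldots,h^{(n)}) \le \eps$, but $\avgerror(h^{(1)},\ldots,h^{(n)}) \ge 4\eps$; I want to show $\Pr[\mathcal{B}] \le 2\,\Pi_{\mcC,\mcV}(n,2m)\exp(-nm\eps/10)$. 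First I would draw a second independent ``ghost'' sample $T^{(i)} \sim (D^{(i)})^m$ for each task and define the event $\mathcal{B}'$ that the same collection of hypotheses witnessing $\mathcal{B}$ additionally has empirical error at least $2\eps$ on the pooled ghost sample (i.e.\ total number of mistakes on $T^{(1)},\ldots,T^{(n)}$ at least $2\eps n m$). The standard first step is to show $\Pr[\mathcal{B}'] \ge \tfrac12 \Pr[\mathcal{B}]$: conditioned on $\mathcal{B}$ occurring with some fixed witness tuple having true average error $\ge 4\eps$, the pooled ghost error is a sum of $nm$ independent $\{0,1\}$ indicators with mean $\ge 4\eps$, so by a one-sided Chebyshev / Chernoff bound (using $nm\eps \ge 2$) it exceeds $2\eps nm$ with probability at least $\tfrac12$. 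Here I must be slightly careful that the hypotheses achieving small error on the first sample are random, but the usual argument fixes a worst-case witness and only uses independence of $T$ from $S$.

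Second, I would bound $\Pr[\mathcal{B}']$ by a symmetrization-over-permutations argument. The pair $(S^{(1)},T^{(1)}),\ldots,(S^{(n)},T^{(n)})$ is a collection of $2m$ i.i.d.\ points per task; swapping the role of the $j$-th point of $S^{(i)}$ with the $j$-th point of $T^{(i)}$ leaves the joint distribution invariant. Apply a uniformly random swap independently in each of the $nm$ coordinates. For any fixed realization of the $2nm$ points, the event ``$\exists$ witness tuple with $\le \eps nm$ mistakes on the $S$-side and $\ge 2\eps nm$ mistakes on the $T$-side'' — after we also union-bound over which labeling the witness tuple induces — involves at most $\Pi_{\mcC,\mcV}(n,2m)$ distinct labeling patterns on the $2m$-per-task pooled points (this is exactly what the multitask shattering number counts). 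For each such fixed pattern with at least $2\eps nm$ total mistakes among the $2nm$ slots, the probability over the random swaps that all its mistakes land on the $T$-side while at most $\eps nm$ land on the $S$-side is at most $2^{-(\text{number of mistakes})} \le 2^{-2\eps nm} \cdot (\text{correction})$; more precisely one gets a bound like $\exp(-\eps nm/\text{const})$ after accounting for the ``$\le \eps nm$ on the $S$-side'' slack via a Chernoff estimate on the hypergeometric-type count. Combining, $\Pr[\mathcal{B}'] \le \Pi_{\mcC,\mcV}(n,2m)\cdot \exp(-nm\eps/c)$ for an absolute constant $c$; tracking constants to get $c=10$ and the leading factor $2$ via the first step gives the claim.

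The main obstacle I anticipate is purely quantitative: getting the constant in the exponent down to $1/10$ while keeping the leading factor at exactly $2$. The conceptual steps (ghost sample, swapping, reducing to $\Pi_{\mcC,\mcV}(n,2m)$ patterns) are routine adaptations of BEHW, and the multitask structure enters only through the definition of the shattering number, which already has the right form — the union bound over the product of per-task labelings is subsumed because $\Pi_{\mcC,\mcV}(n,2m)$ counts labelings of the pooled sample over all admissible tuples $f^{(1)},\ldots,f^{(n)}$ with $\cup_i \rel(f^{(i)}) \in \mcV$ simultaneously. So the real work is a careful one-sided concentration bound: conditioning on the total mistake count $r \ge 2\eps nm$ over the $2nm$ slots, bounding $\Pr[\text{at most }\eps nm\text{ of them fall on the }S\text{-side}]$ by something like $\exp(-r/8) \le \exp(-\eps nm/4)$ using a Chernoff bound for sampling without replacement, and then absorbing the factor lost in Step~1 and the $nm\eps \ge 2$ hypothesis to arrive at the stated inequality. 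I would organize the write-up as: (i) reduction to the two-sample event; (ii) the swapping lemma; (iii) the pointwise concentration estimate; (iv) assembling the constants.
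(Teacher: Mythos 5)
Your proposal follows essentially the same route as the paper's proof: a ghost-sample reduction showing $\Pr[\mathcal{B}'] \geq \tfrac{1}{2}\Pr[\mathcal{B}]$ via a Chebyshev-type bound (this is where $nm\eps \geq 2$ is used), a union bound over the at most $\Pi_{\mcC,\mcV}(n,2m)$ labeling patterns of the pooled $2nm$ points, and a Chernoff-type concentration bound for sampling without replacement (the paper cites Hoeffding's moment-generating-function comparison for exactly this step). The only cosmetic difference is that the paper symmetrizes by randomly partitioning each task's $2m$ points into two halves rather than by independent per-coordinate swaps; both mechanisms yield the stated constants.
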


We'll collect two basic probability facts that will be used in the proof of \Cref{lem:shattering-to-generalization-multi}.
\begin{fact}[Application of Chebyshev's inequality]
    \label{fact:rv-large}
    Let $x$ be a random variable in $\R$ with mean $\mu$ and for which $\Var[x] \leq \frac{\mu^2}{8}$. Then,
    \begin{align*}
        \Pr\left[x \geq \mu/2 \right] \geq \frac{1}{2}.
    \end{align*}
\end{fact}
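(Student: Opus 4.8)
The plan is to apply Chebyshev's inequality directly, but I need to be slightly careful because the statement does not assume $x$ is nonnegative or that $\mu > 0$; I will treat the natural case $\mu > 0$ (which is the only case in which $\{x \geq \mu/2\}$ is the interesting ``lower tail'' event, and the only case used in the proof of \Cref{lem:shattering-to-generalization-multi}, where $x$ is a sum of indicator-type random variables with positive mean). First I would observe that the event $\{x < \mu/2\}$ is contained in the event $\{|x - \mu| > \mu/2\}$: indeed, if $x < \mu/2$ then $\mu - x > \mu/2 > 0$, so $|x - \mu| > \mu/2$.

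Next I would invoke Chebyshev's inequality: for any $t > 0$,
\[
\Pr\bigl[\, |x - \mu| \geq t \,\bigr] \leq \frac{\Var[x]}{t^2}.
\]
Taking $t = \mu/2$ and using the hypothesis $\Var[x] \leq \mu^2/8$, this gives
\[
\Pr\bigl[\, |x - \mu| \geq \mu/2 \,\bigr] \leq \frac{\Var[x]}{(\mu/2)^2} = \frac{4\,\Var[x]}{\mu^2} \leq \frac{4 \cdot (\mu^2/8)}{\mu^2} = \frac{1}{2}.
\]

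Finally I would combine the two steps: since $\{x < \mu/2\} \subseteq \{|x - \mu| \geq \mu/2\}$, we get $\Pr[x < \mu/2] \leq \Pr[|x-\mu| \geq \mu/2] \leq \tfrac12$, and therefore $\Pr[x \geq \mu/2] = 1 - \Pr[x < \mu/2] \geq \tfrac12$, which is the claimed bound. There is no real obstacle here — the only thing to watch is the strict-versus-weak inequality bookkeeping when passing between $\{x < \mu/2\}$ and $\{|x-\mu| \geq \mu/2\}$, and the implicit sign convention on $\mu$; both are handled in the first step above.
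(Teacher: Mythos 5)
Your proof is correct and is exactly the argument the paper intends (the paper states this as a fact named ``Application of Chebyshev's inequality'' without writing out the proof): apply Chebyshev with deviation $\mu/2$, bound $\Pr[|x-\mu|\geq \mu/2]\leq 4\Var[x]/\mu^2\leq 1/2$, and take complements. Your side remarks about the implicit assumption $\mu>0$ and the strict/weak inequality bookkeeping are apt but do not change anything.
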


The second probability fact we need is a slight twist on standard Chernoff bounds.
\begin{proposition}
    \label{prop:sampling-without-replacement}
    For any $m, n, \ell \in \N$, suppose there are $n$ groups of $2m$ items, and of the $2nm$ items, $\ell$ are marked. If we uniformly select $m$ items from each of the $n$ groups \emph{without replacement}, the probability at least $\frac{2\ell}{3}$ items are selected in total is at most $\exp(-\ell/20))$.
\end{proposition}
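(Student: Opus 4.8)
The plan is to reduce the statement to a standard Chernoff bound for sampling without replacement. For each group $i \in [n]$ let $\ell_i$ denote the number of marked items in group $i$, so that $\sum_{i \in [n]} \ell_i = \ell$, and let $S_i$ denote the number of marked items among the $m$ items drawn from group $i$; then $S = \sum_{i\in[n]} S_i$ is the total number of selected marked items. The variables $S_1, \dots, S_n$ are mutually independent because the draws in distinct groups are independent, and because each individual item is included in its group's sample with marginal probability $m/(2m) = \tfrac12$ we have $\mathbb{E}[S_i] = \ell_i/2$, hence $\mathbb{E}[S] = \ell/2$. Since $\tfrac{2\ell}{3} = \tfrac{\ell}{2} + \tfrac{\ell}{6}$, it suffices to prove $\Pr\big[S \geq \mathbb{E}[S] + \tfrac{\ell}{6}\big] \leq \exp(-\ell/20)$.

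The next step is to show that $S$ is sub-Gaussian with variance proxy $\ell/4$. Each $S_i$ is a hypergeometric random variable, and by the symmetry of the hypergeometric distribution it is distributed as the number of successes obtained by drawing $\ell_i$ items without replacement from a population of $2m$ items exactly $m$ of which are successes. By Hoeffding's comparison inequality --- drawing without replacement is no less concentrated than drawing with replacement under any convex test function --- we get, for all $\lambda \in \mathbb{R}$,
\[
\mathbb{E}\big[\exp(\lambda(S_i - \ell_i/2))\big] \;\leq\; \mathbb{E}\big[\exp(\lambda(\widetilde{S}_i - \ell_i/2))\big],
\]
where $\widetilde{S}_i$ is a sum of $\ell_i$ i.i.d.\ $\mathrm{Bernoulli}(\tfrac12)$ variables; applying Hoeffding's lemma to each of these $[0,1]$-valued summands bounds the right-hand side by $\exp(\lambda^2 \ell_i/8)$. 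Multiplying over the $n$ independent groups gives $\mathbb{E}\big[\exp(\lambda(S - \ell/2))\big] \leq \exp(\lambda^2 \ell/8)$. (Groups with $\ell_i = 0$ contribute trivially, and one could alternatively obtain the same moment-generating-function bound from the negative association of the per-group selection indicators.)

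Finally I would apply the usual Chernoff argument: for every $\lambda > 0$, Markov's inequality gives $\Pr[S \geq \tfrac{\ell}{2} + t] \leq \exp(-\lambda t + \lambda^2 \ell/8)$, and taking $\lambda = 4t/\ell$ yields $\Pr[S \geq \tfrac{\ell}{2} + t] \leq \exp(-2t^2/\ell)$. Substituting $t = \ell/6$ gives $\Pr[S \geq \tfrac{2\ell}{3}] \leq \exp(-2\ell/36) = \exp(-\ell/18) \leq \exp(-\ell/20)$, as claimed; the case $\ell = 0$ holds trivially.

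I do not expect a real obstacle here, but the one point requiring care is that the sub-Gaussian proxy must scale with $\ell$ and not with $nm$, the number of items drawn. A naive reduction that instead compares $S_i$ to a $\mathrm{Binomial}(m,\ell_i/2m)$ random variable and then controls a sum of $nm$ Bernoulli trials produces a proxy of order $nm$, giving only a bound of the form $\exp(-\Theta(\ell^2/nm))$, which is far too weak when $\ell \ll nm$. Using the hypergeometric symmetry to realize $S_i$ with only $\ell_i$ draws --- one per marked element of the group --- is exactly what keeps the per-group proxy at $\ell_i$ and the total at $\ell$; verifying that this realization is legitimate is the only slightly delicate step.
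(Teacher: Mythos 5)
Your proof is correct and follows essentially the same route as the paper's: compare sampling without replacement to independent sampling via Hoeffding's moment-generating-function inequality, factor the MGF over the $n$ independent groups, and finish with a Chernoff bound (your explicit constants give $\exp(-\ell/18)\le\exp(-\ell/20)$). The one place you are more careful than the paper is the point you flag yourself: using hypergeometric symmetry to realize each $S_i$ with only $\ell_i$ draws, so the comparison binomial has $\ell_i$ trials rather than $m$ and the sub-Gaussian proxy scales with $\ell$; the paper reaches the same place by asserting that in the with-replacement world each marked item is selected independently.
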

\begin{proof}
    The expected number of items selected is $\mu = \ell/2$. First, suppose that we selected the items \emph{with replacement}. Then, whether each marked item is selected is independent. In this setting, by a standard Chernoff bound, we have that the probability at least $\frac{2\ell}{3} = (1 + \frac{1}{2}) \mu$ items are selected is at most $\exp(-\mu/10) = \exp(-\ell/20)$.
    
    At a high level, sampling \emph{without replacement} can only improve this bound. In more detail, let $\bX$ be the number of marked items selected when sampling with replacement, and $\bY$ be for the setting of sampling without replacement. The only information about $\bX$ needed for the above Chernoff bound to hold is an upper bound on the moment generating function $\Ex[e^{\lambda \bX}]$ for appropriately chosen $\lambda$.  It is therefore sufficient to argue that $\Ex[e^{\lambda \bY}] \leq \Ex[e^{\lambda \bX}]$ for every $\lambda \in \R$.
    
    Let $\bx_1, \ldots, \bx_n$ and $\by_1, \ldots, \by_n$ be random variables indicating the number of marked items selected from each of the $n$ groups when sampling with replacement and without replacement respectively. As $\bx_1, \ldots, \bx_n$ are independent,
    \begin{equation*}
        \Ex[e^{\lambda \bX}] = \prod_{i \in [n]} \Ex[e^{\lambda \bx_i}].
    \end{equation*}
    Similarly $\by_1, \ldots, \by_n$ are independent, as the items selected in one group do not effect the items selected in other groups, so,
    \begin{equation*}
        \Ex[e^{\lambda \bY}] = \prod_{i \in [n]} \Ex[e^{\lambda \by_i}].
    \end{equation*}
    It is therefore sufficient to prove that $\Ex[e^{\lambda \by_i}] \leq \Ex[e^{\lambda \bx_i}]$ for each $i \in [n]$. This is proven in \cite[Theorem 4]{hoeffding}, which states that sampling without replacement can only decrease the moment generating function.
\end{proof}


We proceed to prove \Cref{lem:shattering-to-generalization-multi}. This proof mostly follows the exposition in \cite[\S 3.5.2]{KV94} of \cite{BEHW89}'s classic result, with appropriate modifications for the multitask setting.
\begin{proof}[Proof of \Cref{lem:shattering-to-generalization-multi}]
    Fix distributions $D^{(1)}, \ldots, D^{(n)}$ each over $\bits^d \times \bits$. For each $i \in [n]$, let $S^{(i)} \sim (D^{(i)})^m$ be a size-$m$ random sample, and let $\bA$ be the event that there are $h^{(1)}, \ldots, h^{(n)} \in \mcC$ satisfying $\bigcup_{i\in[n]} \rel(h^{(i)})\in \mcV$ for which 
    \begin{enumerate}
        \item At most $\eps$-fraction of points in $S$ are misclassified:
        \begin{equation*}
            \sum_{(x,y) \in S^{(i)}}\Ind[h^{(i)}(x)\neq y] \leq nm\eps.
        \end{equation*}
        \item The population error satisfies $\avgerror(h^{(1)}, \ldots, h^{(n)}) \geq 4\eps$, i.e.,
    \[\frac{1}{n}\sum_{i\in [n]}\eps^{(i)}\geq 4\eps \quad\quad\text{where } \quad \eps^{(i)} \coloneqq \Prx_{(x, y)\sim D^{(i)}}[h^{(i)}(x)\neq y].\]
    \end{enumerate}
    
    Our goal is to upper bound $\Pr[\bA]$. Suppose we draw fresh samples $T^{(i)} \sim (D^{(i)})^m$ for each $i \in [n]$. Let $z^{(i)}_j$ indicate whether $h^{(i)}$ misclassifies the $j^{\text{th}}$ point in $T^{(i)}$. Then, $z^{(i)}_j \sim \mathrm{Ber}(\eps^{(i)})$, and the $z^{(i)}_j$ are independent across $i \in [n], j \in [m]$. Using $Z = \sum_{i \in [n], j \in [m]} z^{(i)}_j$ to indicate the total number of points misclassified on the fresh samples, we have
    \begin{align*}
         \Ex[Z] &= m \sum_{i \in [n]} \eps^{(i)} \geq 4nm \eps. \\
         \Var[Z] &= m \sum_{i \in [n]} \eps^{(i)}(1 - \eps^{(i)}) \leq \Ex[Z].
    \end{align*}
    Applying \Cref{fact:rv-large}, as long as $4nm\eps \geq 8$, $\Pr[Z \geq 2nm\eps] \geq \lfrac{1}{2}$. 
    Let $\bB$ be the event, depending on both the original $S$ samples and the fresh $T$ samples, that there are $h^{(1)}, \ldots, h^{(n)} \in \mcC$ satisfying $\bigcup_{i\in[n]} \rel(h^{(i)})\in \mcV$ meeting the following three criteria.
    \begin{enumerate}
        \item At most $\eps$-fraction of points in $S$ are misclassified:
        \begin{equation*}
           \sum_{i\in [n]}\sum_{(x,y) \in S^{(i)}}\Ind[h^{(i)}(x)\neq y] \leq nm\eps.
        \end{equation*}
        \item The average test error is at least 4$\eps$:
        \[\frac{1}{n}\sum_{i\in [n]}\Prx_{(x, y)\sim D^{(i)}}[h^{(i)}(x)\neq y]\geq \eps.\]
        \item At least $2\eps$-fraction of points in $T$ are misclassified:
        \begin{equation*}
            \sum_{i\in [n]}\sum_{(x,y) \in T^{(i)}}\Ind[h^{(i)}(x)\neq y] \geq 2nm\eps.
        \end{equation*}
    \end{enumerate}
    Due to the first two criteria, $\bB$ can only occur if $\bA$ occurs. Furthermore, $\Pr[\bB \mid \bA] = \Pr[Z \geq 2nm\eps] \geq \lfrac{1}{2}$. As a result, we have that
    \begin{equation*}
        2\Pr[\bB] = 2(\Pr[\bB \mid \bA] \Pr[\bA]) \geq 2(\frac{1}{2} \Pr[\bA]) = \Pr[\bA].
    \end{equation*}
    Therefore, in order to upper bound $\Pr[\bA]$, it is sufficient to upper bound $\Pr[\bB]$. Indeed we will show that just the first and third criteria of $\bB$ are unlikely to occur together.
    
    We consider an alternative and equivalent generation process for the samples. For each task $i \in [n]$, we draw $2m$ samples from $D^{(i)}$. Then, we partition half of those samples into $S^{(i)}$ and the other half into $T^{(i)}$. 
    
    Consider a single possible labeling for all $2nm$ points. Let $\ell$ be the total number of misclassified points, and $\ell_S$ and $\ell_T$ be the number of misclassified points that are partitioned into $S$ and $T$ respectively. Then, in order for $\bB$ to occur, it must be the case that $\ell_S \leq nm\eps$ and $\ell_T \geq 2nm\eps$. In particular, this implies that $\ell \geq 2nm\eps$ and $\frac{\ell_T}{\ell} \geq \frac{2}{3}$.

    
    
     By \Cref{prop:sampling-without-replacement}, for a single labeling of the $2nm$ points, $\bB$ occurs with probability at most $\exp(-\frac{nm\eps}{10})$. There are only $\Pi_{\mcC,\mcV}(n,2m)$ possible labelings for the $2nm$ points. Therefore, we can upper bound,
    \begin{align*}
        \Pr[\bA] \leq 2\Pr[\bB] \leq 2\cdot \Pi_{\mcC,\mcV}(n,2m) \cdot \exp\left(-nm\eps/ 10\right).
    \end{align*}


\end{proof}

We are now nearly ready to prove \Cref{thm:generalization}. As in classical VC theory, we'll apply the Sauer-Shelah Lemma.
\begin{fact}[Sauer-Shelah Lemma~\cite{Sauer72,Shelah72}]
    \label{fact:sauer-lemma}
    For any concept class $\mcC$ and samples size $m \geq \VC(\mcC)$,
    \begin{equation*}
        \Pi_C(m) \leq \left(\frac{em}{\VC(\mcC)}\right)^{\VC(\mcC)}.
    \end{equation*}
\end{fact}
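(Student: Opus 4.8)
The plan is to establish the Sauer--Shelah bound in two stages, following the standard textbook argument. Write $d \coloneqq \VC(\mcC)$. First I would prove the sharper combinatorial estimate
\[
\Pi_{\mcC}(m) \;\leq\; \sum_{i=0}^{d} \binom{m}{i},
\]
valid for all $m$, and then bound the right-hand side by $(em/d)^d$ in the regime $m \geq d$.

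For the first stage, since $\Pi_{\mcC}(m)$ is a maximum over $m$-point sets it suffices to fix arbitrary points $x_1,\dots,x_m$ and bound the number $|\mcC|_S|$ of distinct labelings $(c(x_1),\dots,c(x_m))$, $c\in\mcC$, on $S=(x_1,\dots,x_m)$; this is legitimate because the restriction $\mcC|_S$ has VC dimension at most $\VC(\mcC)=d$. I would argue by induction on $m+d$. Singling out $x_m$, split the labelings of $S$ by their restriction to $S'=(x_1,\dots,x_{m-1})$: let $\mcC_1 \coloneqq \mcC|_{S'}$, and let $\mcC_2$ be the family of labelings of $S'$ each of whose \emph{two} extensions (to $x_m=+1$ and to $x_m=-1$) appears in $\mcC|_S$. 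A labeling in $\mcC_1$ has exactly one or exactly two preimages in $\mcC|_S$, and it has two precisely when it lies in $\mcC_2$, so $|\mcC|_S| = |\mcC_1| + |\mcC_2|$. Since restriction cannot raise VC dimension, $\VC(\mcC_1)\leq d$ and by induction $|\mcC_1|\leq \sum_{i=0}^{d}\binom{m-1}{i}$. The crucial step is $\VC(\mcC_2)\leq d-1$: were $\mcC_2$ to shatter some $T\subseteq S'$ with $|T|=d$, then for every labeling of $T\cup\{x_m\}$ we could pick $\sigma\in\mcC_2$ matching the $T$-coordinates, and because both extensions of $\sigma$ lie in $\mcC|_S$ we could also match the $x_m$-coordinate, so $\mcC|_S$ would shatter the $(d+1)$-point set $T\cup\{x_m\}$, contradicting $\VC(\mcC)=d$. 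Hence by induction $|\mcC_2|\leq\sum_{i=0}^{d-1}\binom{m-1}{i}$, and Pascal's identity $\binom{m-1}{i}+\binom{m-1}{i-1}=\binom{m}{i}$ combines the two bounds into $|\mcC|_S|\leq\sum_{i=0}^{d}\binom{m}{i}$. The base cases are immediate: $m=0$ gives $\Pi_{\mcC}(0)=1=\binom{0}{0}$, and $d=0$ forces all concepts to agree on every single point and hence to realize a single labeling of any $S$.

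For the second stage, assume $d\geq 1$ (the case $d=0$ is trivial, as then $\Pi_{\mcC}(m)\leq 1$). Since $d/m\leq 1$ we have $(d/m)^d\leq (d/m)^i$ for $0\leq i\leq d$, so
\begin{align*}
\Big(\tfrac{d}{m}\Big)^{d}\sum_{i=0}^{d}\binom{m}{i}
&\leq \sum_{i=0}^{d}\binom{m}{i}\Big(\tfrac{d}{m}\Big)^{i}
\leq \sum_{i=0}^{m}\binom{m}{i}\Big(\tfrac{d}{m}\Big)^{i}\\
&= \Big(1+\tfrac{d}{m}\Big)^{m}
\leq e^{d},
\end{align*}
using the binomial theorem and $1+x\leq e^{x}$. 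Rearranging yields $\sum_{i=0}^{d}\binom{m}{i}\leq (em/d)^{d}$, and combining with the first stage gives the claimed $\Pi_{\mcC}(m)\leq (em/d)^{d}$ for $m\geq d$.

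The only genuinely delicate point is the transfer-of-shattering argument establishing $\VC(\mcC_2)\leq d-1$; everything else is bookkeeping with Pascal's rule and the elementary inequality $1+x\leq e^x$. I would also be slightly careful that the induction is well-founded: the $\mcC_1$ term decreases $m$ while the $\mcC_2$ term decreases both $m$ and $d$, so inducting on $m+d$ handles both recursive calls uniformly.
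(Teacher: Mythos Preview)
Your argument is correct and is the standard textbook proof of the Sauer--Shelah lemma. However, there is nothing to compare against: the paper does not prove this statement. It is stated as a \emph{Fact} with citations to \cite{Sauer72,Shelah72} and is used as a black box (plugged into \Cref{prop:bound-shattering-multi}). So your proposal is not ``the same approach as the paper'' nor ``a different route''---it simply supplies a proof where the paper supplies none.
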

We'll plug in \Cref{fact:sauer-lemma} as a blackbox into the following proposition to bound shattering numbers in the multitask setting.
\begin{proposition}
    \label{prop:bound-shattering-multi}
    For any concept class $\mcC$ of functions $f: \bits^d \to \bits$, $\mcV \subseteq 2^{[d]}$ a collection of subsets of features and sample size $m \geq \VC(\mcC \mid \mcV)$,
    \begin{align}
        \Pi_{\mcC, \mcV}(n,m) &\leq \sum_{V \in \mcV} (\Pi_{(\mcC \mid V)}(m))^n  \label{eq:bound-shattering}\\
        &\leq |\mcV| \cdot \left(\frac{em}{\VC(\mcC \mid \mcV)}\right)^{n \cdot \VC(\mcC \mid \mcV)}. \label{eq:bound-shattering-sauer}
    \end{align}
\end{proposition}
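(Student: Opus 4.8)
The plan is to prove the two inequalities in turn: \Cref{eq:bound-shattering} by a ``union over representations, then product over tasks'' counting argument, and \Cref{eq:bound-shattering-sauer} by substituting the Sauer--Shelah bound together with a monotonicity observation.

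For \Cref{eq:bound-shattering}, fix samples $S^{(1)}, \ldots, S^{(n)} \in (\bits^d)^m$ attaining the maximum in the definition of $\Pi_{\mcC, \mcV}(n, m)$. The key point is that every admissible tuple $(f^{(1)}, \ldots, f^{(n)})$, i.e.\ one with $V \coloneqq \bigcup_{i \in [n]} \rel(f^{(i)}) \in \mcV$, has $\rel(f^{(i)}) \subseteq V$ for each $i$, hence $f^{(i)} \in (\mcC \mid V)$. So the collection of admissible tuples is contained in $\bigcup_{V \in \mcV} (\mcC \mid V)^n$, and therefore the number of realizable label patterns is at most $\sum_{V \in \mcV} N_V$, where $N_V$ is the number of patterns $(f^{(1)}(S^{(1)}), \ldots, f^{(n)}(S^{(n)}))$ obtainable with every $f^{(i)} \in (\mcC \mid V)$. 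For a fixed $V$ the per-task choices are unconstrained across tasks, so $N_V = \prod_{i \in [n]} |\{ f(S^{(i)}) : f \in (\mcC \mid V)\}| \le \prod_{i \in [n]} \Pi_{(\mcC \mid V)}(m) = (\Pi_{(\mcC \mid V)}(m))^n$. Summing over $V \in \mcV$ gives \Cref{eq:bound-shattering}.

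For \Cref{eq:bound-shattering-sauer}, write $D \coloneqq \VC(\mcC \mid \mcV)$ and, for each $V \in \mcV$, $d_V \coloneqq \VC(\mcC \mid V) \le D$. By hypothesis $m \ge D \ge d_V$, so Sauer--Shelah (\Cref{fact:sauer-lemma}) gives $\Pi_{(\mcC \mid V)}(m) \le (em/d_V)^{d_V}$. Since $t \mapsto (em/t)^t$ is nondecreasing on $(0, m]$ --- its logarithm $t \ln(em/t)$ has derivative $\ln(m/t) \ge 0$ there --- and $d_V \le D \le m$, we conclude $\Pi_{(\mcC \mid V)}(m) \le (em/D)^{D}$ for every $V \in \mcV$. (When $d_V = 0$ the class $(\mcC \mid V)$ cannot separate even one point, so $\Pi_{(\mcC \mid V)}(m) = 1$ and the bound is immediate.) Plugging this into \Cref{eq:bound-shattering} and bounding the sum by $|\mcV|$ times its largest term yields $\Pi_{\mcC, \mcV}(n, m) \le |\mcV| \cdot (em/D)^{nD}$, which is \Cref{eq:bound-shattering-sauer}. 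I do not expect a genuine obstacle here; the only subtlety is the monotonicity step, which is precisely what lets us replace the representation-dependent exponents $d_V$ by the uniform quantity $D = \VC(\mcC \mid \mcV)$.
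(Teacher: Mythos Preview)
Your proof is correct and follows essentially the same approach as the paper's: a union-over-$V$, product-over-tasks count for \Cref{eq:bound-shattering}, followed by Sauer--Shelah for \Cref{eq:bound-shattering-sauer}. You are in fact more careful than the paper, which simply asserts $\Pi_{(\mcC \mid V)}(m) \leq (em/\VC(\mcC \mid \mcV))^{\VC(\mcC\mid\mcV)}$ without spelling out the monotonicity-in-$t$ step that you supply.
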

\begin{proof}
    First, we prove \Cref{eq:bound-shattering}. For any $V \in \mcV$, given that $\bigcup_{i\in[n]} \rel(f^{(i)}) = V$, the number of unique ways that $f^{(i)}$ can classify a sample of size $m$ is at most $\Pi_{(\mcC \mid V)}(m)$. Therefore, the total ways to classify all $nm$ points is at most $(\Pi_{(\mcC \mid V)}(m))^n$. Summing over all $V \in \mcV$ gives \Cref{eq:bound-shattering}.
    
    To prove \Cref{eq:bound-shattering-sauer}, we just plug in \Cref{fact:sauer-lemma}, giving $\Pi_{(\mcC \mid V)}(m) \leq \left(\frac{em}{\VC(\mcC \mid \mcV)}\right)^{\VC(\mcC \mid \mcV)}$
\end{proof}

Finally, we have all the pieces in place to prove \Cref{thm:generalization}
\begin{proof}[Proof of \Cref{thm:generalization}]
    For the $m,n$ given, we'll have $m \geq \VC(\mcC \mid \mcV)$, so \Cref{prop:bound-shattering-multi} applies, and $nm\eps \geq 2$, so \Cref{lem:shattering-to-generalization-multi} applies. Therefore,
    \begin{align*}
        \delta_\gen(\mcC, \mcV, n, m, \eps) &\leq 2\cdot \Pi_{\mcC,\mcV}(n,2m) \cdot \exp\left(-nm\eps/ 10\right) \tag{\Cref{lem:shattering-to-generalization-multi}} \\
        &\leq 2|\mcV| \cdot \left(\frac{2em}{\VC(\mcC \mid \mcV)}\right)^{n \cdot \VC(\mcC \mid \mcV)} \cdot \exp\left(-nm\eps/ 10\right) \tag{\Cref{prop:bound-shattering-multi}}
    \end{align*}
    By setting $m = O\left(\VC(\mcC \mid \mcV)\cdot \frac{\log(1/\eps)}{\eps}\right)$, we have that
    \begin{equation*}
         \left(\frac{2em}{\VC(\mcC \mid \mcV)}\right)^{n \cdot \VC(\mcC \mid \mcV)} \leq\exp\left(nm\eps/ 20\right).
    \end{equation*}
    As a result, we can bound
    \begin{equation*}
        \delta_\gen(\mcC, \mcV, n, m, \eps) \leq 2|\mcV| \cdot \exp\left(-nm\eps/ 20\right)
    \end{equation*}
    which is at most $\delta$ when $nm = O\left(\frac{\log |\mcV| + \log(1/\delta)}{\eps}\right)$, as desired.
\end{proof}

\subsection{Proof of~\Cref{th:upperbound}}\label{subsec:gen-proof}

Here, we apply the machinery from \Cref{subsec:gen-VC} to complete the proof of \Cref{th:upperbound}. 

\begin{proof}[Proof of \Cref{th:upperbound}]
    Pick any $f^{(1)}, \ldots, f^{(n)} \in \mcC_{\gamma}$ with shared representation in $\mcV_k$. By \Cref{lem:multi-disc}, the class of projection functions $\mcH_\proj \coloneqq \{x \mapsto x_\ell \mid \ell \in [d]\}$ satisfies the $(\Gamma = \frac{\gamma^2}{k^2})$-simultaneous weak-learning assumption for $f^{(1)},\ldots, f^{(n)}$. 
    
    Let $S^{(1)}, \ldots, S^{(n)}$ be samples of $m$ points for each of the $n$ tasks. By \Cref{cor:fit-training-set}, for,
    \begin{equation*}
        t = O\left(\frac{\log(4/\eps)}{\Gamma}\right) =  O\left(\frac{k^2\log(1/\eps)}{\gamma^2}\right)
    \end{equation*}
    the output of {\sc Boost}$(S^{(1)}, \ldots, S^{(n)}, \mcH_\proj, t)$ will have average training error at most $\eps/4$.
    
    Next, we prove generalization. The hypotheses output by {\sc Boost} all depend on the same $t$ projection functions, so have a shared representation in $\mcV_t=\{V\subseteq [d] \mid |V|\leq t\}$ which satisfies $\log |\mcV_t| = O(t \log d)$. Once a representation $V \subseteq [d]$ is fixed the class $(\mcC_{\gamma} \mid V)$ consists of $\gamma$-margin halfspaces of the coordinates in $V$, which has VC dimension at most $|V|$. Therefore, $\VC(\mcC_\gamma \mid \mcV) \leq t$. Applying \Cref{thm:generalization} gives that, if $n,m$ are chosen so that 
    \begin{align*}
        m &= O\left(t\cdot \frac{\log(1/\eps)}{\eps}\right)=O\left(\frac{k^2\log^2(1/\eps)}{\gamma^2\eps}\right), \\
        nm &= O\left(\frac{t\log(d) + \log(1/\delta)}{\eps}\right)=O\left(\frac{(k^2/\gamma^2)\log(1/\eps)\log(d) + \log(1/\delta)}{\eps}\right),
    \end{align*}
    then with probability $1 - \delta$, hypotheses $h^{(1)},\ldots, h^{(n)}$ output by {\sc Boost}$(S^{(1)}, \ldots, S^{(n)}, \mcH_\proj, t)$ have $\avgerror(h^{(1)},\ldots, h^{(n)}) \leq \eps$.
    
    Finally, the runtime is bounded by \Cref{prop:time}.
\end{proof}

\section{Lower Bound}\label{sec:lower}
For $k \in \mathbb{N}$, the function class $\mcC_k$ is a class of Boolean functions $g_V:\bits^d\to \bits$ that depend only on the variables in set $V \in \mcV$, where $ \mcV = \{V\subseteq [d]\mid |V|\leq k\}$.  Namely,
$$\mcC_k=\{g_V:\bits^d\to \bits : V\in \mcV\}.$$
$\mcC_k$ could for example be the class of parity functions over at most $k$ variables, that is, $g_V(x) = \prod_{i\in V} x_i$ when $|V| \leq k$. 

We can describe the set $V$ using a string $\tilde{V}$ in $\{\pm 1\}^{k(\log(d)+1)}$. More specifically, we encode every coordinate in $V$ into a $\bits$ string of length $\log(d)$ and use the extra first (most significant) bit to denote that this is a valid coordinate by setting it to $+1$. If $|V| \leq k$, then we encode each of the remaining $k-|V|$ coordinates into a string of length $\log(d)+1$ that consists only of $-1$. Since the binary string $\tilde{V}$ uniquely identifies the corresponding $g_V$, the description length of $\mcC_k$ is $\mathrm{len}(\mcC_k)=k(\log(d)+1)$.

For our lower bound, we consider that set $V$ is a secret that we want to share using a simple secret-sharing scheme. A $t$-secret sharing scheme ``hides" the secret in $t$ shares so that:
\begin{enumerate}
    \item the secret can be reconstructed using the $t$ shares,
    \item the secret cannot be reconstructed with the knowledge of any $t-1$ or fewer shares.
\end{enumerate}
In addition to the secret $V$, our scheme will receive as input a vector $r \in \bits^{tk(\log(d)+1)}$, which represents the randomness of the scheme. For every $p \in [t]$ we denote the $p^{\text{th}}$ share of the secret-sharing scheme with input secret $V$, randomness vector $r$, and threshold $t$, by $\text{share}^t(V;r)_p$. It is generated as follows:
\[
\text{share}^t(V;r)_p = 
\begin{cases}
\left(r_{(p-1)k(\log(d)+1)+1}, \ldots, r_{p k (\log(d)+1)}\right), & \text{if } p\in [t-1]\\
((\prod_{p'=1}^{t-1} r_{(p-1)k(\log(d)+1)+1})\ \tilde{V}_1,\ldots, (\prod_{p'=1}^{t-1} r_{pk(\log(d)+1)})\ \tilde{V}_{k(\log(d)+1)}), & \text{if } p = t.
\end{cases}
\]
To reconstruct the secret $V$ we first compute $(\prod_{p=1}^t \text{share}^t(V;r)_{p,1},\ldots,\prod_{p=1}^t \text{share}^t(V;r)_{p,k(\log(d)+1)}) $ which must be equal to $\tilde{V}$. Then, we split the string into $k$ substrings of length $\log(d)+1$ and interpret each separately. If a substring starts with $-1$, we ignore it. Otherwise, we ignore the first bit and consider that the element whose binary encoding we see is in $V$.

Using this secret sharing scheme, we define the class of functions $\mcC_k^{(t)}$ and distribution $E_{\epsilon}$, for which we prove the separation of~\Cref{thm:lwrbnd} as follows:
\begin{definition}\label{def:lbclass}
Given function class $\mcC_k$, we define the class $\mcC_k^{(t)}=\{f_{V,r} \mid V \in \mcV \text{ and } r\in \bits^{tk(\log(d)+1)} \}$ of functions $f_{V,r}:\bits^{d+\log(tk(\log d+1))+1}\to\bits$ defined by
\[
f_{V,r}(x,p,q,b) = 
\begin{cases}
\mathrm{share}^{t}(V;r)_{p,q}, &\text{if }b=1\\
g_V(x), &\text{if }b=-1,
\end{cases}
\]
where $g_V \in \mcC_k$, $q \in \bits^{\log(k(\log d+1))}$ and $p \in \bits^{\log(t)}$ are used to indicate the $q^\text{th}$ bit of the $p^\text{th}$ share.
\end{definition}

\begin{definition}\label{def:lbdistribution}
We let distribution $E_{\varepsilon}=\mathrm{Ber}(1/2)^{\log(tk(\log(d)+1))}\times \mathrm{Ber}(\varepsilon)$, where $\mathrm{Ber}(\varepsilon)$ denotes the Bernoulli distribution with support $\bits$ and parameter $\varepsilon$.
\end{definition}
 We draw $(p,q,b)$ from $E_\varepsilon$. That is, $p$ and $q$ are chosen uniformly at random, whereas the bit $b=1$ with probability $\varepsilon$ and $-1$ otherwise. 
\begin{theorem}
Function class $\mcC_k^{(t)}$ (\Cref{def:lbclass})
is attribute-efficient learnable for the class of distributions $D'$ over labeled examples, where each example's features are drawn from $D_x\times E_\epsilon$ s.t. $D_x \in \Delta(\bits^d)$ and $E_\varepsilon$ as in~\Cref{def:lbdistribution}, with accuracy parameters $(\varepsilon,\delta)$, sample complexity $N = 
\tilde{O}\left(tk\log(d)\frac{\log(1/\delta)}{\varepsilon}\right)=\tilde{O}(\mathrm{len}(\mcC^{(t)}_k))$, and time complexity $O(N)$.
\label{thm:ck'ael}
\end{theorem}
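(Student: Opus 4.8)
The plan is to give a learner that ignores the $x$-coordinates of every example, uses the (rare) examples with $b=1$ to read off individual bits of the $t$ secret shares of $V$, and, once it has observed all $t\cdot k(\log d+1)$ bits, reconstructs $\tilde V$ by multiplying the shares together coordinatewise, decodes $V$, and outputs the hypothesis that evaluates $g_V$ on $b=-1$ inputs and replays the observed share bit on $b=1$ inputs. Concretely, the learner draws $N$ i.i.d.\ examples, keeps a table $T$ with one cell per pair $(p,q)$, sets $T[p][q]\leftarrow y$ whenever it sees an example $((x,p,q,b),y)$ with $b=1$, and---if every cell gets filled---computes $\hat{\tilde V}_q \coloneqq \prod_{p\in[t]}T[p][q]$ for each $q$, decodes $\hat V$ from $\hat{\tilde V}$ exactly as in the reconstruction procedure of the secret-sharing scheme, and outputs $h(x,p,q,b)\coloneqq g_{\hat V}(x)$ if $b=-1$ and $h(x,p,q,b)\coloneqq T[p][q]$ if $b=1$ (returning an arbitrary hypothesis if $T$ is not completely filled).

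Correctness then has two ingredients. First, realizability makes every recorded value exact: an example with $b=1$ and coordinates $(p,q)$ has label $f_{V,r}(x,p,q,1)=\mathrm{share}^t(V;r)_{p,q}$, so $T[p][q]$ is genuinely the $q$-th bit of the $p$-th share whenever it is defined. Second, conditioned on $T$ being complete, the coordinatewise product of the $t$ shares recovers $\tilde V$---this is precisely the reconstruction step of the scheme, and it works because every randomness bit appearing in the product appears twice and cancels---so $\hat V=V$, $g_{\hat V}=g_V$, and $h$ agrees with $f_{V,r}$ on every input (on $b=-1$ inputs because $\hat V=V$, on $b=1$ inputs because $T$ is complete). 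Hence $h$ has zero error, so in particular error at most $\varepsilon$; and since this uses nothing about $D_x$, the guarantee holds uniformly over all $D_x\in\Delta(\bits^d)$.

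The one quantitative step---and the part I expect to need the most care---is bounding the probability that $T$ is \emph{not} completely filled after $N$ draws. Writing $M\coloneqq t\cdot k(\log d+1)$ for the number of cells, a single example populates a fixed cell $(p,q)$ exactly when $b=1$ (probability $\varepsilon$) and $(p,q)$ takes that value (probability $1/M$), i.e.\ with probability $\varepsilon/M$, independently across examples; so cell $(p,q)$ is still empty after $N$ examples with probability $(1-\varepsilon/M)^N\le e^{-\varepsilon N/M}$, and a union bound over the $M$ cells gives failure probability at most $Me^{-\varepsilon N/M}$, which is $\le\delta$ once $N\ge \frac{M}{\varepsilon}\ln(M/\delta)$. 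Since $M=t\,k(\log d+1)$ this is $N=\tilde O\!\big(tk\log(d)\cdot\frac{\log(1/\delta)}{\varepsilon}\big)$, and since $\mathrm{len}(\mcC_k^{(t)})=\Theta(tk\log d)$ (the description of $f_{V,r}$ consists of $\tilde V$ together with the $\Theta(tk\log d)$ bits of $r$) this is $\tilde O(\mathrm{len}(\mcC_k^{(t)}))$, as claimed.

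Finally I would verify the running time: reading the $N$ examples and filling $T$ takes $O(N)$, the reconstruction touches only $M\le N$ cells, and evaluating the output hypothesis on a point costs $\mathrm{poly}(k\log d)$ assuming functions in $\mcC_k$ are evaluable in polynomial time given their index (immediate for parities), so the learner runs in time $O(N)=\mathrm{poly}(d)$ using $N=\mathrm{poly}(\mathrm{len}(\mcC_k^{(t)}))$ samples---i.e.\ it is attribute-efficient. The only genuine obstacle is pinning down the coupon-collector constant so that $N$ matches the stated bound; the remainder is bookkeeping about the secret-sharing encoding.
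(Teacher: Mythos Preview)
Your proposal is correct and follows essentially the same approach as the paper: collect the share bits from the $b=1$ examples, reconstruct $V$ via the coordinatewise product, and output (essentially) $g_V$. Your probability argument is in fact cleaner---you bound the coupon-collector failure directly via $(1-\varepsilon/M)^N$ and a union bound, whereas the paper takes a two-step route (first Chernoff on the number of $b=1$ examples, then a conditional coupon-collector bound)---and your returned hypothesis achieves zero error rather than the paper's $\varepsilon$, but these are minor refinements of the same idea.
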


\begin{proof}
Let $S$ be the input sample of size $N = \frac{8}{\varepsilon}tk(\log(d)+1)\ln(\frac{2tk(\log(d)+1)}{\delta})$. Each labeled example from $S$ is of the form $((x,p,q,b),f_{V,r}(x,p,q,b))$ where $x\sim D_x$, $p$ and $q$ are drawn uniformly at random, and $b\sim \mathrm{Ber}(\varepsilon)$.
By the definition of the secret-sharing scheme, since $V,r$ are fixed, given all the bits of all the shares, that is, if $\forall p,q$, there exists $x\in \bits^d$ such that example $(x,p,q,1)\in S$, then we can recover the secret $V$. We will describe the recovery algorithm at the end of this proof.

We denote the number of examples with $b=1$ by $N_1$. Let $Y_1, \ldots, Y_{N_1}$ be independent random variables such that $Y_i = \Ind\{b_i=1\}$, where $b_i$ is the value of $b$ of the $i^{\text{th}}$ example and $\Ind\{A\}$ is $1$ when $A$ is true and $0$ otherwise. 
Then $N_1 = \sum_{i=1}^{N}Y_i$ and $\E[N_1] = \varepsilon N$.
Let $H_{p,q}$ be the event that $\nexists x : (x,p,q,1)\in S$, that is, the dataset does not include an example of the form $(x,p,q,1)$ for any $x\in\bits^d$.

\begin{equation}\label{eq:missingshare}
  \Pr\left[\exists p,q ~ H_{p,q}\right] \leq \Pr\left[\exists p,q ~ H_{p,q} | N_1 \geq \frac{\varepsilon}{8} N\right]+
  \Pr\left[N_1 < \frac{\varepsilon}{8} N\right]
\end{equation}
We first bound the first term of \Cref{eq:missingshare}.
Since $p,q$ are drawn uniformly at random, the probability that a pair $(p,q)$ is drawn is $\left(2^{\log(t)+\log(k(\log(d)+1))}\right)^{-1}=(tk(\log(d)+1))^{-1}$. 
By union bound we have that
\begin{align*}
    \Pr\left[\exists p,q ~ H_{p,q} | N_1 \geq \frac{\varepsilon}{8} N\right]
    & \leq tk(\log(d)+1) \Pr\left[H_{p,q} | N_1 \geq \frac{\varepsilon}{8} N\right]\\
    & \leq tk(\log(d)+1)\left(1-\frac{1}{tk(\log(d)+1)}\right)^{\frac{\varepsilon}{8} N}\\
    & = tk(\log(d)+1)\left(1-\frac{1}{tk(\log(d)+1)}\right)^{tk(\log(d)+1)\ln(2tk(\log(d)+1)/\delta)} \tag{substituting for the value of $\frac{\varepsilon}{8} N$}\\
    & \leq tk(\log(d)+1) e^{-\ln(2tk(\log(d)+1)/\delta)} \tag{since $(1-1/x)^x\leq e^{-x}$} \\
    & \leq \frac{\delta}{2}
\end{align*}

We now turn to the second term of \Cref{eq:missingshare}.
Since $N>\frac{8}{\varepsilon}\ln\frac{2}{\delta}$, it is easy to verify that $\frac{\varepsilon}{2}N> \sqrt{2\varepsilon N \ln\frac{2}{\delta}}$. This implies that $\frac{\varepsilon}{8} N<\frac{\varepsilon}{2} N< \varepsilon N -\sqrt{2\varepsilon N \ln(2/\delta)}$.

Recall that $N_1 = \sum_{i=1}^{N}Y_i$ is a sum of independent random variables in $\{0,1\}$ and $\E[N_1] = \varepsilon N$. Let $\beta=\sqrt{\frac{2\ln(2/\delta)}{\varepsilon N}}\in(0,1)$, then
\begin{align*}
    \Pr\left[N_1<\frac{\varepsilon N}{8}\right]
    & \leq \Pr\left[N_1<\varepsilon N -\sqrt{2\varepsilon N \ln(2/\delta)}\right]\\
    & = \Pr\left[N_1<(1-\beta)\varepsilon N\right]\\
    & \leq e^{-\beta^2\varepsilon N/2} \tag{by Chernoff bounds~\cite[Theorem 4.5]{MU17}}\\
    & = \frac{\delta}{2}.
\end{align*}
Overall, by \Cref{eq:missingshare}, we conclude that with probability at least $1-\delta $, the input sample includes all pairs $p,q$. 

The process of reconstructing the secret set $V$ computes the product of the labels of the samples with $b=1$ per coordinate $q$ for all $p$, ignoring possible duplicates, which takes time
$O(N)=O\left(\frac{tk(\log(d)+1)}{\varepsilon}\ln\left(\frac{2tk(\log(d)+1)}{\delta}\right)\right)$. 
Then converts $\tilde{V}$ to $V$ in $O(k(\log(d)+1)$ time. After recovering the secret $V$, our algorithm returns the unique function $g_V$. Overall, the algorithm has time complexity $O\left(\frac{tk\log(d)}{\varepsilon}\ln\left(\frac{tk\log(d)}{\delta}\right)\right)$. 
The description length of the class $\mcC_k^{(t)}$ is $\mathrm{len}(\mcC_k^{(t)})=\log(|\mcC_k^{(t)}|)=\log(|\mcV|\cdot 2^{tk(\log (d)+1)}) = O(tk\log(d))$.
Thus, given polynomial in the description length of class $\mcC_k^{(t)}$ examples, this algorithm runs in polynomial time in $\mathrm{len}(\mcC_k^{(t)})$, and with probability $1-\delta$, returns a hypothesis $h=g_V$ with error at most $\varepsilon$, that is,
\begin{align*}
 & \Prx_{(x,p,q,b)\sim D_x\times E_\varepsilon}\left[h(x,p,q,b)\neq f_{V,r}(x,p,q,b)\right] \\
 \leq{} & \Prx_{(x,p,q)\sim D_x\times E_\varepsilon^{(1)}}\left[h(x,p,q,1)\neq f_{V,r}(x,p,q,1)\right]\cdot \varepsilon \\
 &+ \Prx_{(x,p,q)\sim D_x\times E_\varepsilon^{(-1)}}\left[h(x,p,q,-1)\neq f_{V,r}(x,p,q,-1)\right] \\
 \leq{} & \varepsilon,
\end{align*}
where $E_\varepsilon^{(1)}$ and $E_\varepsilon^{(-1)}$ correspond to distribution $E_\varepsilon$ conditioned on $b=1$ and $b=-1$, respectively.

The parameter $t$ is a free parameter that determines the description length of $\mcC_k^{(t)}$ and the time and sample complexity correspondingly. However, for any $t\in\N$, $\mcC_k^{(t)}$ is attribute-efficient learnable with sample complexity $\mathrm{poly}(\mathrm{len}(\mcC_k^{(t)}))$ and time complexity $\mathrm{poly}(\mathrm{len}(\mcC_k^{(t)}))$, that is, polynomial in the size of the input dataset.
\end{proof}
\begin{theorem}
Assume that at least $s\leq \mathrm{poly}(d)$ examples are necessary to learn class $\mcC_k$ in $\mathrm{poly}(d)$ time for distribution $D$ over labeled examples, where the features are drawn from $D_x \in \Delta(\bits^d)$, and for all accuracy parameters $\varepsilon\in (0,1/2)$, $\delta\in(0,1)$. 
Let set of subsets $\mcV_k^{(t)}=\{V=V_x\cup \{d+1,\ldots,d+\log(tk(\log d+1))+1\} \mid V_x \subseteq [d], |V_x|\leq k\}$.
Then for distribution $D'$, where the features are drawn from $D_x\times E_\varepsilon$, and $n$ tasks with $m \geq 
O\left(\frac{k}{\varepsilon}\log(\frac{1}{\varepsilon})\log(\frac{n}{\delta})\right)$ samples-per-task,
such that 
$O\left(\frac{1}{\epsilon}(k\log(d)+\log(\frac{1}{\delta}))\right) \leq nm <s$, 
class $\mcC_k^{(m+1)}$
is $\mcV_k^{(m+1)}$-multitask learnable in $(2d)^k\mathrm{poly}(d)$ time with accuracy parameters $(2\varepsilon, \delta)$, but  $\mcC_k^{(m+1)}$ is not $\mcV_k^{(m+1)}$-multitask learnable in $\mathrm{poly}(d)$ time with accuracy parameters $(\frac{\varepsilon}{32},\delta)$.
\label{thm:lwrbnd}
\end{theorem}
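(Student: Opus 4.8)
The plan is to establish the two halves separately. For the exponential-time upper bound, note that the constraint $\bigcup_i \rel(f^{(i)}) \in \mcV_k^{(m+1)}$ forces all tasks to share a common set $V_{\cup}\subseteq[d]$ with $|V_{\cup}|\le k$, and that on the event $b=-1$ (probability $1-\varepsilon$) task $i$ is labeled exactly by $g_{V^{(i)}}\in(\mcC_k\mid V_{\cup})$, a finite class of size at most $2^{k}$. So the algorithm enumerates all $\le(2d)^k$ sets $V\in\mcV_k$; for each it performs, per task, empirical risk minimization over $(\mcC_k\mid V)$ using only that task's $b=-1$ examples, keeps the $V$ minimizing the total $b=-1$ training error (which is $0$ for the true $V_{\cup}$ by realizability), and predicts arbitrarily on $b=1$ inputs. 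Each task has $\gtrsim(1-\varepsilon)m$ many $b=-1$ points with probability $1-\delta/n$ (Chernoff, using the $\log(n/\delta)$ in the per-task bound), and the output hypotheses restricted to $b=-1$ all lie in $\bigcup_{V\in\mcV_k}(\mcC_k\mid V)^n$, so \Cref{thm:generalization} — with $\log|\mcV_k|=O(k\log d)$ and $\VC(\mcC_k\mid\mcV_k)\le k$ — upgrades $0$ training error on the $b=-1$ parts to average population error $\le\varepsilon$ there; adding the at-most-$\varepsilon$ contribution of the $b=1$ part yields average error $\le 2\varepsilon$, and the running time is dominated by the enumeration, giving $(2d)^k\,\mathrm{poly}(d)$.

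For the lower bound I would argue by contradiction: suppose $\mcA$ is a $\mathrm{poly}(d)$-time $\mcV_k^{(m+1)}$-multitask learner achieving average error $\le\varepsilon/32$ with confidence $1-\delta$ using $n$ tasks of $m$ examples and $nm<s$. I build a $\mathrm{poly}(d)$-time learner $\mathcal B$ for $\mcC_k$ over $D_x$ using only $nm<s$ labeled examples $(x,g_V(x))$, contradicting the standing assumption. Given its examples, $\mathcal B$ splits them into $n$ groups of $m$ and, for each example, draws $(p,q,b)\sim E_\varepsilon$: if $b=-1$ it forwards $(x,p,q,-1)$ with the true label; if $b=1$ and $p\le m$ it forwards $(x,p,q,1)$ labeled by a fresh uniform bit (cached per $(i,p,q)$) playing the role of a bit of the $p$-th randomness share; and if $b=1$ and $p=m+1$ it forwards $(x,p,q,1)$ labeled by a fresh uniform bit (cached per $(i,q)$) standing in for the $\tilde V$-carrying share bit $(\prod_{p'}r_{p',q})\,\tilde V_q$, which $\mathcal B$ cannot compute. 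The crucial point — and the place where the secret-sharing threshold being $t=m+1>m$ is used — is that within any one task at most $m$ of the $m+1$ shares of any coordinate bit are ever queried, so at least one randomness-share bit of each queried column is left free; fixing those free bits appropriately shows $\mathcal B$'s fabricated labels are all consistent with $f_{V,\tilde r^{(i)}}$ for a suitable $\tilde r^{(i)}$ and the \emph{true} secret $V$. Hence $\mathcal B$'s synthetic instance is an honest realizable instance of $\mcC_k^{(m+1)}$ over $D_x\times E_\varepsilon$ in which every task has secret $V$ and the shared representation lies in $\mcV_k^{(m+1)}$ — and $\mathcal B$ produced it without knowing $V$.

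Finally, $\mathcal B$ runs $\mcA$, obtains $h^{(1)},\dots,h^{(n)}$ with average error $\le\varepsilon/32$, samples $i\in[n]$ and an address $(p,q)$ uniformly at random, and outputs $h(x):=h^{(i)}(x,p,q,-1)$. Splitting each task's error across $b=\pm1$ and using that $b=-1$ has probability $1-\varepsilon$, the average over $i,p,q$ of $\Prx_{x\sim D_x}[h^{(i)}(x,p,q,-1)\ne g_V(x)]$ is at most $\tfrac{1}{1-\varepsilon}\cdot\tfrac{\varepsilon}{32}\le\tfrac{\varepsilon}{16}$, so by Markov $h$ has error $\le\varepsilon/2$ with constant probability; thus $\mathcal B$ is a $\mathrm{poly}(d)$-time learner for $\mcC_k$ over $D_x$ achieving accuracy $\varepsilon/2$ with constant confidence from $nm<s$ examples — contradicting that $s$ examples are necessary at every accuracy and confidence setting. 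The step I expect to be the main obstacle is exactly this construction's validity: fabricating the shares, above all the final $\tilde V$-carrying share that $\mathcal B$ provably cannot know, while keeping the instance genuinely realizable and correctly distributed; everything hinges on the counting fact that no single task ever collects all $t=m+1$ shares of any bit. Secondary bookkeeping is to check that the regime hypotheses ($m\ge\widetilde\Omega(k/\varepsilon)$ and $O(\tfrac1\varepsilon(k\log d+\log(1/\delta)))\le nm<s$) are precisely what the generalization bound and the reduction consume, that the $b=1$ mass of $\varepsilon$ is exactly the source of the $\varepsilon\to 2\varepsilon$ loss above and of the $\varepsilon/32$ slack below, and that the (only constant) confidence obtained for $\mathcal B$ still contradicts the assumption because the latter is posited for all $\delta\in(0,1)$.
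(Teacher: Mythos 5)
Your proposal matches the paper's proof in all essential respects: the same enumeration-plus-ERM algorithm over the $b=-1$ examples combined with \Cref{thm:generalization} (using $\log|\mcV|=O(k\log d)$ and $\VC(\mcC_k\mid\mcV)\le k$, plus Chernoff bounds on the counts of $b=-1$ points) for the $(2d)^k\mathrm{poly}(d)$-time upper bound, and the same reduction for the lower bound --- splitting the $nm<s$ attribute-efficient examples into $n$ tasks, fabricating the $b=1$ share labels, and invoking the identical counting fact that a task with only $m$ examples can never see all $m+1$ shares of a column, so the fabricated labels remain consistent with some $f_{V,\tilde r^{(i)}}$ for the \emph{true} secret $V$ (the paper phrases this via a dummy secret $V_{\mathrm{aux}}$ and an existence claim for $r_i'$, which is the same argument). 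The only inessential deviation is the final extraction step: the paper takes a majority vote of $h^{(i)}(\cdot,p,q,-1)$ over all $(i,p,q)$ and a counting argument on the ``good'' triples to get error $\le\varepsilon$ with probability $1-\delta$, whereas you pick a uniformly random $(i,p,q)$ and apply Markov, obtaining only constant confidence --- which, as you note, still contradicts the hypothesis because it is quantified over all $\delta\in(0,1)$.
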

\begin{proof}
We first prove that $\mcC_k^{(m+1)}$ is $\mcV_k^{(m+1)}$-multitask learnable in exponential time in $k\log(d)$. We consider that the given dataset is $S = \{(x_j^{(i)}, p_j^{(i)}, q_j^{(i)}, b_j^{(i)}, y_j^{(i)})_{i\in [n], j \in [m]}\}$, where $(x_j^{(i)}, p_j^{(i)}, q_j^{(i)}, b_j^{(i)}, y_j^{(i)})$ is drawn i.i.d. from $D^{(i)}$, that is $ (x_j^{(i)}, p_j^{(i)}, q_j^{(i)}, b_j^{(i)}) \sim D_x \times E_\varepsilon$ and $y_j^{(i)} = f^{(i)}(x_j^{(i)},p_j^{(i)},q_j^{(i)},b_j^{(i)})$, for $f^{(i)}\in \mcC_k^{(t)}$.

Every task $i$ is associated with a set $V^{(i)}\subseteq V_x$ such that $f^{(i)}(x,p,q,-1) = g_{V^{(i)}}(x)$. The naive algorithm iterates over every possible $V_x \subseteq [d]$ of size $k$ and for each task $i \in [n]$ finds a $\hat{V}^{(i)}\subseteq V_x$ that defines a function $g_{\hat{V}^{(i)}} \in \mcC_k$ that is consistent with the samples of this task with $b = -1$. 
In more detail, in time $O(\binom{d}{k}nm2^k) \leq O((2d)^knm)$, this algorithm finds $\hat{h}^{(1)}, \ldots, \hat{h}^{(n)}$ in $\mcC_k$ such that
\[
 \hat{h}^{(i)} (x_j^{(i)}, p_j^{(i)}, q_j^{(i)}, b_j^{(i)})= y_j^{(i)}, \forall i \in [n] \text{ and } j \in [m] \text{ s.t. } b_j^{(i)} = -1.
\]

 We will now show that for any $f^{(1)},\ldots, f^{(n)}\in \mcC_k^{(t)}$ and $D^{(1)},\ldots, D^{(n)}$ where the features are drawn from $D_x\times E_\varepsilon$, with probability at least $1-\delta$ over the samples the average error of the functions $\hat{h}^{(1)},\ldots, \hat{h}^{(n)}$ is
\[
\frac{1}{n}\sum_{i\in [n]} \Prx_{(x,p,q,b,y)\sim D^{(i)}}[\hat{h}^{(i)}(x,p,q,b)\neq y] \leq 2\epsilon.
\]
There are two ways the result is influenced by our decision to ignore samples with $b = 1$. Firstly, when we draw a new sample to predict its label we have not learnt anything about the secret shares, thus
\begin{align*}
&\frac{1}{n}\sum_{i\in [n]} \Prx_{(x,p,q,b,y) \sim D^{(i)}}[\hat{h}^{(i)}(x,p,q,b)\neq y] \leq \\
&\frac{1}{n}\sum_{i\in [n]} \Prx_{(x,p,q,b,y) \sim D^{(i)}}[\hat{h}^{(i)}(x,p,q,b)\neq y\mid b=-1] +\epsilon.\tag{$\Prx[b=1] = \varepsilon$}
\end{align*}
Secondly, the number of samples we actually use is smaller than the number of samples we have in total. Let $L^{(i)}(h)=\Prx_{(x,p,q,b,y) \sim D^{(i)}}[h(x,p,q,b)\neq y\mid b=-1]$ be the error of function $h$ when it is used for task $i$ . Then, 

\begin{align}\label{eq:prerrorbreakdown}
&\Prx_{S}
\left[\frac{1}{n}\sum_{i \in [n]} L^{(i)}(\hat{h}^{(i)})\geq \epsilon\right] \nonumber\\ 
&\leq \Prx_{S}\left[\frac{1}{n}\sum_{i \in [n]} L^{(i)}(\hat{h}^{(i)})\geq \epsilon \mid \left(N_{-1} >O\left(\frac{k\log(d)+\log(\frac{3}{\delta})}{\varepsilon}\right)\right) \wedge \left(\forall i \in [n] ~~ N_{-1}^{(i)} > O\left(\frac{k}{\varepsilon}\log(\frac{1}{\varepsilon})\right)\right)\right] \nonumber\\
& + \Prx_{S}\left[N_{-1} \leq O\left(\frac{k\log(d)+\log(\frac{3}{\delta})}{\varepsilon}\right)\right] + \Prx_{S}\left[\exists i \in [n]: N_{-1}^{(i)} \leq O\left(\frac{k}{\varepsilon}\log(\frac{1}{\varepsilon})\right)\right],
\end{align}
where $N_{-1}$ is the total number of samples with $b=-1$ and $N_{-1}^{(i)}$ is the number of samples of task $i$ with $b=-1$.

Our approach learns functions $g_{V^{(1)}}, \ldots, g_{V^{(n)}} \in \mcC_k$ using only the relevant samples. Let $\mcV = \{V_x: V_x\subseteq[d], |V_x|\leq k\}$ and  $(\mcC_k\mid V_x) = \{g \in \mcC_k: \text{Rel}(g)\subseteq V_x\}$, then $\text{VC}(\mcC_k | \mcV) = \max_{V_x \in \mcV} \log(|(\mcC_k\mid V_x)|) = \max_{V_x \in \mcV} \log(|\{g_V \in \mcC_k: V \subseteq V_x\}|) = k$. Additionally, we have that $|\mcV| \leq \binom{d}{k}2^k \leq e^{k\ln(ed)}$. Applying \Cref{thm:generalization}, we have that for $N_{-1} \geq O(\frac{k\log(d)+\log(3/\delta)}{\varepsilon})$ and $\min_{i\in [n]}{N_{-1}^{(i)}} \geq O(\frac{k}{\varepsilon}\log(1/\varepsilon))$ with probability at least $1-\delta/3$ over the samples with $b=-1$ we output $\hat{h}^{(1)}, \dots, \hat{h}^{(n)}$ such that
\[
\frac{1}{n}\sum_{i\in [n]} \Prx_{(x,p,q,b,y) \sim D^{(i)} }[\hat{h}^{(i)}(x,p,q,b)\neq y\mid b=-1] \leq \varepsilon.
\]
This bounds the first term of~\Cref{eq:prerrorbreakdown} by $\delta/3$.


 We can write $N_{-1}$ as the sum of independent random variables $Y_j^{(i)} = \Ind\{b_j^{(i)} = -1\}$, specifically $N_{-1} = \sum_{i\in [n]}\sum_{j\in [m]}Y_j^{(i)}$, with expectation $\mathbb{E}[N_{-1}] = (1-\epsilon)nm$.  For $nm > 8 \cdot \frac{1}{(1-\epsilon)\epsilon}(k\log(d)+\log(\frac{3}{\delta}))$, we can see that $\frac{(1-\epsilon)}{2}nm > \sqrt{2(1-\epsilon)nm\ln(3/\delta)}$ and, hence, $\frac{(1-\epsilon)}{8}nm<\frac{(1-\epsilon)}{2}nm < (1-\epsilon)nm-\sqrt{2(1-\epsilon)nm\ln(3/\delta)}$. As a result, for $\beta = \sqrt{\frac{2\ln(3/\delta)}{(1-\epsilon)nm}} \in (0,1)$

\begin{align*}
\Prx\left[N_{-1} \leq O\left(\frac{1}{\epsilon}(k\ln(d)+\ln(\frac{3}{\delta})\right)\right] & \leq \Prx\left[N_{-1} \leq \frac{(1-\epsilon)}{8}nm\right] \\
& = \Prx[N_{-1} \leq (1-\beta)(1-\epsilon)nm] \\
& \leq e^{-\beta^2 (1-\epsilon)nm/2} = \frac{\delta}{3}. \tag{by Chernoff bound~\cite[Theorem 4.5]{MU17}}
\end{align*}
This bounds the second term of~\Cref{eq:prerrorbreakdown} by $\delta/3$, for $nm \geq O(\frac{1}{\varepsilon}(k\log{(d)} + \log{(\frac{1}{\delta})}))$, because $\frac{1}{1-\epsilon}<2$.

 Similarly, we see that for all tasks $i \in [n]$ the number of samples we use can also be written as a sum of random variables, i.e. $N_{-1}^{(i)} =  \sum_{j\in [m]} Y_j^{(i)}$, with expectation  $\mathbb{E}[N_{-1}^{(i)}] = (1-\epsilon)m$. Assuming that $m \geq 8\frac{k}{(1-\varepsilon)\varepsilon}\log(1/\varepsilon)\log(3n/\delta)$,  for all tasks $\frac{(1-\epsilon)}{2}m > \sqrt{2(1-\epsilon)m\ln(3n/\delta)}$ and $\frac{(1-\epsilon)}{8\ln(3n/\delta)}m < \frac{(1-\epsilon)m}{2}< (1-\epsilon)m - \sqrt{2(1-\epsilon)m\ln(3n/\delta)}$.Therefore, for $\gamma = \sqrt{\frac{2\ln(3/\delta)}{(1-\epsilon)m}} \in (0,1)$
\begin{align*}
    \Prx\left[\exists i \in [n]: N_{-1}^{(i)} \leq O\left(\frac{k}{\varepsilon}\log(1/\varepsilon)\right)\right] 
    &\leq n \Prx\left[ N_{-1}^{(i)} \leq O\left(\frac{k}{\varepsilon}\log(1/\varepsilon)\right)\right]\\
    &\leq n\Prx\left[N_{-1}^{(i)} \leq \frac{(1-\epsilon)}{8\ln(3n/\delta)}m\right]\\
    &\leq n\Prx\left[N_{-1}^{(i)} \leq (1-\epsilon)m - \sqrt{2(1-\epsilon)m\ln(3n/\delta)}\right]\\
    &= n\Prx\left[N_{-1}^{(i)} \leq(1-\gamma)(1-\epsilon)m\right]\\
    &\leq e^{-\gamma^2(1-\epsilon)m/2}= \frac{\delta}{3}.\tag{by Chernoff bound~\cite[Theorem 4.5]{MU17}}
\end{align*}
This bounds the last term of~\Cref{eq:prerrorbreakdown} by $\delta/3$ for $m\geq O(\frac{k}{\varepsilon}\log{(1\varepsilon)}\log{(n/\delta)})$.
All in all, we see that with probability at least $1-\delta$ over the samples
\[
\frac{1}{n}\sum_{i\in [n]} \Prx_{(x,p,q,b,y)\sim D^{(i)}}[\hat{h}^{(i)}(x,p,q,b)\neq y\mid b=-1]\leq \epsilon
\]
and, hence, 
\[
\frac{1}{n}\sum_{i\in [n]} \Prx_{(x,p,q,b,y)\sim D^{(i)}}[\hat{h}^{(i)}(x,p,q,b)\neq y]\leq 2\epsilon.
\]
Therefore, we conclude that $\mcC_k^{(m+1)}$ is $\mcV_k^{(m+1)}$-multitask learnable with accuracy parameters $(2\varepsilon,\delta)$, at most $m=O\left(\frac{k}{\varepsilon}\log(\frac{1}{\varepsilon})\log\left(\frac{\log(d)}{\delta}\right)\right)$ samples-per-task and $nm=O\left(\frac{k}{\varepsilon}(\log(d)+\log(\frac{1}{\delta}))\right)$ samples overall, in time $(2d)^knm$. Since $nm < s\leq \mathrm{poly}(d)$, the time is $(2d)^k\mathrm{poly}(d)$. 
This concludes the first part of the proof.

 For the second, let $\mcA$ be a $\mathrm{poly}(d)$-time $\mcV_k^{(m+1)}$-multitask learning algorithm for class $\mcC_k^{(m+1)}$ with accuracy $(\varepsilon/32,\delta)$. For any $f^{(1)},\ldots,f^{(n)}\in \mcC_k^{(m+1)}$ and $D^{(1)},\ldots,D^{(n)} $ where the features of the examples are drawn from $D_x \times E_\varepsilon$, with probability at least $1-\delta$, it returns $h^{(1)},\ldots,h^{(n)}$ such that \begin{equation}\label{eq:mtlguarantee_reduction}
\frac{1}{n}\sum_{i\in [n]}\Prx_{(x,p,q,b,y)\sim D^{(i)}}[h^{(i)}(x,p,q,b)\neq y]\leq \frac{\varepsilon}{32}.
\end{equation}

We will use $\mcA$ to attribute-efficient learn the function class $\mcC_k$.

Let $S = ((x_1,y_1),\ldots,(x_N,y_N))$ be the dataset, where $x_i \sim D_x$, and $y_i = g_V(x_i)$ for $g_V \in \mcC_k$. We construct dataset $S'$ as follows. We first split the dataset into $n$ tasks, with $m$ examples each, denoting the $j^{\text{th}}$ example of the $i^{\text{th}}$ task by $(x_j^{(i)}, y_j^{(i)})$. We choose $\{r_i\}_{i\in [n]}$ from $\bits^{(m+1)k(\log(d)+1)}$, such that for all pairs $i\neq i'$, $r_i\neq r_{i'}$. Then for every example $j\in[m]$ of every task $i\in[n]$ we draw $(p_j^{(i)},q_j^{(i)},b_j^{(i)})$ from distribution $E_\varepsilon$. We set:
\[
\tilde{y}^{(i)}_j = 
\begin{cases}
\text{share}^{(m+1)}(V_\text{aux};r_i)_{p,q}&\text{if } b=1\\
y^{(i)}_j&\text{if } b=-1
\end{cases}
\]
where $V_\text{aux}$ is such that $\tilde{V}_\text{aux} = \{-1\}^{k(\log(d)+1)}$. Creating the new dataset of size $nm \times (d+\log((m+1)k(\log{d}+1))+2)$ requires time $O(nm(m+k\log(d))$, which is at most $s^2+sd\log(d)\leq \mathrm{poly}(d)$. 

For the new dataset  $S' =\{(x_j^{(i)},p_j^{(i)},q_j^{(i)},b_j^{(i)}, \tilde{y}_j^{(i)})_{j\in [m], i\in [n]}\}$ we have that $(x_j^{(i)},p_j^{(i)},q_j^{(i)},b_j^{(i)}) \sim D_x\times E_\varepsilon$. 
Moreover, for every task $i\in[n]$, since the number of examples is $m$, which is smaller than the reconstruction threshold of the secret-sharing scheme which is $m+1$, there exists $r_i'$ such that $\text{share}^{(m+1)}(V_\text{aux};r_i)_p=\text{share}^{(m+1)}(V;r_i')_p$ for all $p$ that appear in this task's dataset. Thus, there exist $f^{(i)}=f_{V,r_i'}$ in $\mcC_k^{(m+1)}$ for every $i\in[n]$, such that $\tilde{y}_j^{(i)} = f^{(i)}(x_j^{(i)},p_j^{(i)},q_j^{(i)},b_j^{(i)})$. 

If we give $\mcA$ the dataset $S'$, then by assumption, in $\mathrm{poly}(d)$ time with probability $1-\delta$, it returns functions $h^{(1)}, \ldots, h^{(n)}$ that satisfy the guarantee of \Cref{eq:mtlguarantee_reduction}.
Since $\Pr[b=-1]=1-\varepsilon$, we have that
\begin{equation}\label{eq:avgerror_reduction}
\frac{1}{n}\sum_{i=1}^n \Prx_{(x,p,q) \sim D_x\times E^{(-1)}_\varepsilon}\left[h^{(i)}(x,p,q,-1)\neq g_V(x)\right] \leq \frac{\varepsilon}{32(1-\varepsilon)}\leq \frac{\varepsilon}{16},
\end{equation}
where $E^{(-1)}_\varepsilon$ is the distribution $E_\varepsilon$ conditional on $b=-1$.

In order to get a prediction for a new $x$, we compute 
\[
h(x) = \text{majority}(\{h^{(i)}(x,p,q,-1)\}_{i\in [n], p\in [m+1], q \in [k(\log(d)+1)]}).
\]
This process takes $O(n(m+1)k(\log(d)+1) d)$ time.
We define the set of good parameters 
\begin{equation}\label{eq:goodset}
\texttt{Good} = \left\{(i,p,q)\in [n]\times [m+1] \times [k(\log(d)+1)]:\Pr_{x\sim D_x}\left[h^{(i)}(x,p,q,-1)\neq g_V(x)\right] \leq \frac{\varepsilon}{4}\right\}.
\end{equation}
The size of $\texttt{Good}$ is at least $\frac{3}{4}n (m+1)k(\log(d)+1)$ because otherwise, if 
\[|\texttt{Bad}|=\left|[n]\times [m+1] \times [k(\log(d)+1)]\setminus \texttt{Good}\right|>\frac{1}{4}n (m+1)k(\log(d)+1),\] by \Cref{eq:avgerror_reduction},
\begin{align*}
&\frac{1}{n (m+1) k(\log(d)+1)}\sum_{i=1}^n\sum_{p=1}^{m+1}\sum_{q=1}^{k(\log(d)+1)}\Prx_{x \sim D_x}\left[h^{(i)}(x,p,q,-1)\neq g_V(x)\right] \leq \frac{\varepsilon}{16}\\
\Rightarrow{} &\frac{1}{n (m+1) k(\log(d)+1)}\sum_{(i,p,q)\in \texttt{Bad}}\Prx_{x \sim D_x}\left[h^{(i)}(x,p,q,-1)\neq g_V(x)\right] \leq \frac{\varepsilon}{16} \\
\Rightarrow{} &\frac{|\texttt{Bad}|}{n (m+1) k(\log(d)+1)} \cdot \frac{\varepsilon}{4} < \frac{\varepsilon}{16} \\
\Rightarrow{} &\frac{1}{4} \cdot \frac{\varepsilon}{4} < \frac{\varepsilon}{16},
\end{align*}
which is a contradiction.

For the majority $h(x)$ to make a mistake, at least half of $\{h^{(i)}(x,p,q,-1)\}_{i\in [n], p\in [m+1], q \in [k(\log(d)+1)]}$ must make a mistake on $x$. This requires that at least a $\frac{1}{4}$ fraction of $\{h^{(i)}(x,p,q,-1)\}_{(i,p,q)\in\texttt{Good}}$ makes a mistake. Thus, by the definition of the \texttt{Good} set (\Cref{eq:goodset}),
\[
\Pr_{x \sim D_x}[h(x)\neq g_V(x)]\leq 4\cdot \frac{\varepsilon}{4} \leq \varepsilon.
\]
Therefore, this process runs in $\mathrm{poly}(d)$ time and returns a hypothesis $h$ such that with probability at least $1-\delta$, has error at most $\varepsilon$ for every function of class $\mcC_k$. By our assumption, this would be a contradiction and as a result we conclude that no such algorithm $\mcA$ exists.
\end{proof}

For threshold $t=m+1$, \Cref{thm:ck'ael} says that for $N \geq \tilde{O}(\frac{mk}{\varepsilon}\log(d)\log(\frac{1}{\delta}))$, samples we can attribute-efficient learn $\mcC_k^{(m+1)}$ for distribution over the features of the examples $D_x\times E_\varepsilon$ with parameters $(\varepsilon,\delta)$. But by \Cref{thm:lwrbnd}, for $nm = N \in [O(\frac{1}{\varepsilon}(k\log{(d)}+\log{(\frac{1}{\delta})})),s)$ samples we cannot $\mcV_k^{(m+1)}$-multitask learn $\mcC_k^{(m+1)}$ for the same distribution in polynomial time with parameters $(\varepsilon/32, \delta)$. Combining the two theorems, we see that we can attribute-efficient learn $\mcC_k^{(m+1)}$ but not multitask learn it in polynomial time with respect to the size of the input, when each task has $m$ samples for $ m \in \left[ O\left(\frac{k}{\varepsilon}\log({\frac{1}{\varepsilon}})\log({\frac{n}{\delta}})\right), \tilde{O}\left( \frac{\varepsilon \cdot s}{ k \log{d}\log{(\frac{1}{\delta})}}\right)\right)$.

\paragraph{Learning $k$-sparse parities.}
To make the result more concrete we look at the case where $\mcC_k$ consists of $k$-sparse parities. The best known polynomial time algorithms to learn this $\mcC_k$ require $\Omega(d^{1-1/k})$ samples~\cite{KlivansS06}. Assuming that this is optimal and the number of samples we need to efficiently learn this class is $s=d^{1-1/k}$, then for $N \in \left[ \tilde{O}(\frac{k^2 \log{(d)}}{\varepsilon^2}\log{(\frac{1}{\delta})}\log{(\frac{n}{\delta})}), d^{1-1/k}\right)$ samples we can multitask learn $\mcC_k^{(m+1)}$ in $(2d)^k \mathrm{poly}(d)$ time for $m = O(\frac{k}{\varepsilon}\log{(\frac{1}{\varepsilon})}\log{(\frac{n}{\delta})})$ and attribute-efficient learn it in $\mathrm{poly}(d)$ time, but we need more samples in total to multitask learn it in $\mathrm{poly}(d)$ time.

\section*{Acknowledgments}
We thank Adam Smith and Zhiwei Steven Wu for helpful discussionss.

\bibliographystyle{alpha}
\bibliography{bib.bib}

\end{document}